\theoremstyle{plain}
\newtheorem{theorem}{Theorem}[section]
\newtheorem{proposition}[theorem]{Proposition}
\newtheorem{corollary}[theorem]{Corollary}
\theoremstyle{definition}
\newtheorem{definition}[theorem]{Definition}
\theoremstyle{remark}
\icmltitlerunning{Contrastive Representation Shaping}
\begin{document}

\twocolumn[
  \icmltitle{From Logits to Latents: Contrastive Representation Shaping for LLM Unlearning}

  % It is OKAY to include author information, even for blind submissions: the
  % style file will automatically remove it for you unless you've provided
  % the [accepted] option to the icml2026 package.

  % List of affiliations: The first argument should be a (short) identifier you
  % will use later to specify author affiliations Academic affiliations
  % should list Department, University, City, Region, Country Industry
  % affiliations should list Company, City, Region, Country

  % You can specify symbols, otherwise they are numbered in order. Ideally, you
  % should not use this facility. Affiliations will be numbered in order of
  % appearance and this is the preferred way.
  \icmlsetsymbol{equal}{*}

  \begin{icmlauthorlist}
    \icmlauthor{Haoran Tang}{Purdue}
    \icmlauthor{Rajiv Khanna}{Purdue}
  \end{icmlauthorlist}

  \icmlaffiliation{Purdue}{Department of Computer Science, Purdue University}

  \icmlcorrespondingauthor{Haoran Tang}{thr@purdue.edu}

  % You may provide any keywords that you find helpful for describing your
  % paper; these are used to populate the "keywords" metadata in the PDF but
  % will not be shown in the document
  \icmlkeywords{Machine Learning, ICML}

  \vskip 0.3in
]

% this must go after the closing bracket ] following \twocolumn[ ...

% This command actually creates the footnote in the first column listing the
% affiliations and the copyright notice. The command takes one argument, which
% is text to display at the start of the footnote. The \icmlEqualContribution
% command is standard text for equal contribution. Remove it (just {}) if you
% do not need this facility.

% Use ONE of the following lines. DO NOT remove the command.
% If you have no special notice, KEEP empty braces:
\printAffiliationsAndNotice{}  % no special notice (required even if empty)
% Or, if applicable, use the standard equal contribution text:
% \printAffiliationsAndNotice{\icmlEqualContribution}

\begin{abstract}
    Most LLM unlearning methods aim to approximate retrain-from-scratch behaviors with minimal distribution shift, often via alignment-style objectives defined in the prediction space. While effective at reducing forgotten content generation, such approaches may act as suppression: forgotten concepts can persist in representations and remain entangled with retained knowledge. We introduce CLReg, a contrastive representation regularizer that identifies forget features while pushing them away from retain features, explicitly reducing forget–retain interference with minimal shifts on retain features. We provide first theoretical insights that relate representation shaping to entanglement reduction. Across unlearning benchmarks and LLMs of different sizes, CLReg decreases forget-retain representation entanglement that facilitates mainstream unlearning methods without positing extra privacy risks, inspiring future work that reshapes the representation space to remove forget concepts.
\end{abstract}

\section{Introduction}
\begin{figure}[t]
    \centering
    \includegraphics[width=\columnwidth]{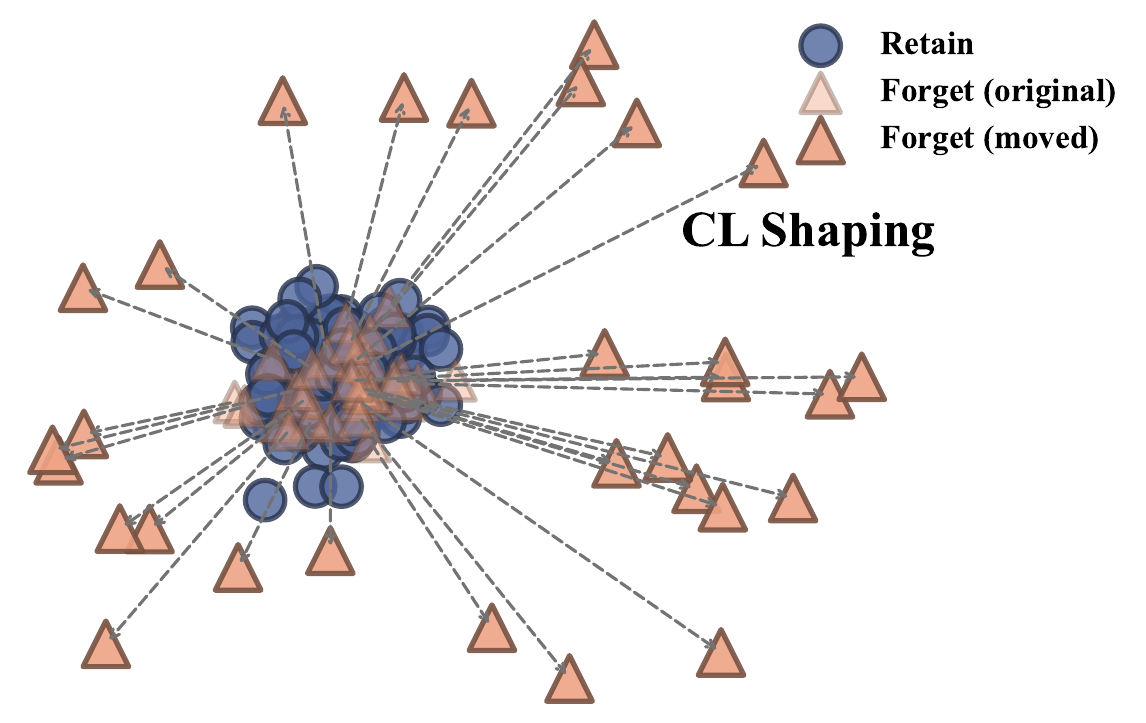}
    \caption{An illustrator of our proposed CLReg. An effective representation shaping regularization can identify and push away forget features with minimal shifts on retain features, shedding light on surgical removal of forget concepts.}
    \vspace{-10pt}
    \label{fig:teaser}
\end{figure}
The ability to remove the influence of specific training data after a model has been deployed—commonly referred to as \emph{machine unlearning}—is increasingly important for privacy legislation and model maintenance. Large language models (LLMs) exacerbate this need: they can memorize and regenerate verbatim sequences from training corpora, making it necessary to delete objectionable or proprietary content on demand.  Retraining a model from scratch on the retained data is the gold‐standard solution, but the computational cost is prohibitive for modern LLMs. Consequently, recent years have seen a surge of \emph{approximate unlearning} methods that aim to efficiently approximate the behaviors of a retrained model.

Early work on machine unlearning evolve from simple heuristics such as fine‑tuning on the retain set and gradient ascent on the forget set to student–teacher distillation and saliency‑based weight masking~\cite{kurmanji2023towards,fan2024salun}. While effective in small models, these approaches often degrade utility or require careful hyperparameter tuning.  Recent investigations reveal that unlearning becomes harder when the forget and retain distributions are more \emph{entangled} or when the forget examples are heavily memorized~\cite{zhao2024makes}; disentangling these representations is thus crucial for selective forgetting.

Unlearning in LLMs is particularly challenging.  Naively applying loss maximization on the forget set leads to instability and catastrophic degradation of general capabilities.  Recent alignment‑based methods have dominated the literature: Negative Preference Optimization (NPO)~\cite{zhang2024negative} reweighs the forgetting objective to discourage generating forgotten content while preserving utility; SimNPO~\cite{fan2024simplicity} simplifies this objective to reduce bias from the reference model. Other approaches include self‑distillation with adjusted logits~\cite{dong2024undial}, and primal–dual constrained entropic unlearning~\cite{entesari2025constrained}. Despite their differences, these methods share a common philosophy: they align the unlearned model’s \emph{prediction distribution} with that of a retrained model, implicitly treating deviations in representation space as undesirable. Alignment‑style objectives succeed in reducing the probability of forgotten outputs, but they largely operate as \emph{suppressors}—the forgotten concepts continue to reside in the representation space, often remain entangled with retained ones in the hidden activations.  As a result, the model may still leak forgotten information or struggle to unlearn highly entangled features.

An emerging view is that \emph{representation shaping} could address this limitation.  Recent empirical work demonstrates that the difficulty of unlearning correlates with the degree of entanglement between forget and retain features~\cite{zhao2024makes,tang2025sharpness}. Separating these clusters should make it easier to adjust or erase one without distorting the other.  In the broader representation‑learning literature, contrastive objectives are well known for simultaneously \emph{aligning} similar examples and \emph{dispersing} all representations on the hypersphere~\cite{wang2020understanding}. Methods such as SimCLR~\cite{chen2020simple} and SimCSE~\cite{gao2021simcse} show that simple augmentations and a cosine‑similarity loss encourage tight clustering of positive pairs and uniform distribution of negatives.  Building on this principle, contrastive unlearning has been proposed for small classifiers: \citet{ijcai2025p830} use a supervised contrastive loss to push forget embeddings away from their original class and pull them toward alternative regions, while \citet{khalil2025coun} align forget examples with retain semantics in low‑capacity models. These works suggest that explicitly shaping the feature space can make forgetting more targeted and reduce collateral damage.  However, they operate in \textbf{supervised} classification settings with modest model sizes and rely on clear labels to define positives and negatives.  It remains unclear whether similar benefits extend to generative LLMs with self-supervision, where forget targets may be instance‑specific and the representation space is high‑dimensional.

In this paper we propose \emph{contrastive representation regularization} (\textbf{CLReg}). Our key idea is to isolate forget features and push them away from retain features in the latent space, thereby reducing entanglement while minimally perturbing the retain representation.  We construct positive pairs for forget examples using lightweight augmentations (dropout masks and paraphrases) and treat retain embeddings as negatives; a DPO‑style contrastive loss encourages forget embeddings to cluster with their own augmentations and repel retain features. We integrate this regularizer with existing unlearning algorithms, demonstrating its versatility across different algorithmic families.  We provide first theoretical analysis to show that contrastive updates strictly decrease anchor–negative similarity and increase separation between forget and retain distributions, providing a principled link between representation shaping and entanglement reduction.  Empirically, across multiple benchmarks and LLMs, CLReg reduces entanglement and improves forgetting quality when combined with state‑of‑the‑art unlearning methods without introducing privacy risks.  These findings challenge the prevailing belief that representation distributions must remain close to a retrained model to achieve effective unlearning; instead, explicit representation shaping can facilitate unlearning and inspire future research on latent‑space interventions. Visualization on the reduced entanglement by CLReg also reveals a clear separation between forget and retain features, enlightening more surgical future work in representation shaping to remove forget concepts completely.

Our contributions can be summarized as follows:

\textbf{Rethinking representation shaping}: Our study effectively shows that regularizing forget concepts in the representation space will not deviate the goal of unlearning or lead to collapse. Instead, our CLReg effectively separates forget features with minimal shifts on retain features, and improves unlearning performance with little privacy concerns.

\textbf{Theoretical and empirical analysis on entanglement}: We are the first to relate a regularizing objective with entanglement reduction in the representation space, which grounds the success of CLReg. Moreover, we provide sufficient quantitative and qualitative analysis to show that CLReg reduces forget-retain feature entanglement by pushing away forget features while keeping retain features intact.

\textbf{Novel regularizer CLReg}: We propose CLReg for representation shaping, which provides a novel insight on how to construct contrastive signals in unlearning without supervision. It also incorporates preference learning and symmetric optimization as options, demonstrating flexibility in extended use cases. 

\textbf{Empirical validation}: We conduct extensive empirical studies to show the effectiveness and desired properties of CLReg. While it consistently improves mainstream unlearning methods across datasets and model sizes, our empirical studies also discloses how CLReg pushes away forget features, inspiring future work.
\section{Related Work}

\subsection{Foundations in Unlearning}
Early work on machine unlearning has focused on \emph{approximate} methods to efficiently erase the influence of designated \emph{forget} examples from trained neural networks. A common baseline is to fine-tune the model on the remaining \emph{retain} data, relying on catastrophic forgetting to reduce performance on the forget set~\cite{golatkar2020eternal,warnecke2021machine}. More direct approaches perform \emph{gradient ascent} on the forget set, maximizing the forget loss to actively degrade the model's memory of those samples. While effective at reducing forget-set performance, naive ascent can substantially harm overall utility and induce collateral forgetting. Variants such as NegGrad+ balance objectives by jointly maximizing loss on the forget set while minimizing loss on the retain set~\cite{kurmanji2023towards}. \citet{kurmanji2023towards} adopt student-teacher training following this scheme. Another family of techniques aims to restrict updates to \emph{salient} parameters. SalUn identifies weights that are most responsible for predicting the forget set and updates only these components, targeting erasure while limiting damage to retained behavior~\cite{fan2024salun}. Other paradigms include label remixing or approximating second-order updates that estimate the contribution of each forgotten sample~\cite{graves2021amnesiac,izzo2021approximate}. Across these genres, the gold standard remains \emph{retraining from scratch} on the dataset with the forget set removed, which is typically infeasible at scale. A central difficulty is balancing \emph{forget quality} against \emph{utility}, i.e., preserving performance on retained knowledge. 

\subsection{LLM Unlearning}
Large language models (LLMs) exacerbate the unlearning problem due to scale and the generative objective, where memorized sequences can be reproduced verbatim. Early adaptations of classical unlearning to LLMs rely on gradient matching or difference-of-gradients objectives (e.g., GradDiff) to counteract the effect of the forget set while preserving retained behavior~\cite{maini2024tofu}. To improve stability, \citet{zhang2024negative} proposed \emph{Negative Preference Optimization} (NPO), an alignment-inspired objective that discourages generation of forget data while controlling optimization dynamics. NPO improves the utility-forgetting trade-off and enables substantially larger-scale forgetting on benchmarks such as TOFU~\cite{maini2024tofu}. However, subsequent work noted that reference-model choices and calibration can bias the optimization toward easy-to-forget instances and lead to uneven forgetting; \citet{fan2024simplicity} introduced SimNPO to simplify the objective and mitigate such bias. \citet{dong2024undial} proposed UnDIAL, which avoids explicit loss maximization by using a distillation-like objective to smoothly suppress undesired behavior and prevent training collapse. More recently, \citet{entesari2025constrained} formulated LLM unlearning as constrained optimization via a \emph{primal-dual} framework, yielding improved Pareto trade-offs. 

\subsection{Contrastive Learning for Unlearning}
Contrastive objectives offer a complementary path: rather than relying only on parameter updates that indirectly affect behaviors, they \emph{explicitly shape representations} to reduce retain-forget entanglement. In classical settings, \citet{ijcai2025p830} proposed a supervised contrastive unlearning objective that pushes embeddings of forget samples away from their original class clusters and pulls them toward alternative regions, enabling selective forgetting while largely preserving performance on retained data. \citet{khalil2025coun} introduced CoUn, which leverages contrastive learning to restructure latent space such that forget examples align with semantics induced by the retain data. Both approaches highlight that representation-space restructuring can make forgetting more targeted and reduce collateral damage in small-to-medium scale networks and datasets. 

However, existing contrastive-unlearning work has been primarily demonstrated in \emph{supervised} classification regimes with relatively small models and datasets, where class labels provide a natural contrastive signal and the forget target is often class- or subset-defined. Extending contrastive objectives to LLM unlearning introduces additional challenges: the forget target may be instance-specific or concept-level without clear labels, the model is vastly larger, and the evaluation criterion is not merely representation separation but \emph{behavioral equivalence} to a retain-only retrained model. Our method builds on this gap by applying a contrastive objective to \emph{isolate and push away forget features from retain features} in LLM representations, thereby reducing entanglement while sharpening the induced prediction distribution. 

\section{CLReg: Contrastive Regularization}

\subsection{Preliminaries}
Let $\mathbf{W}_{\text{FT}}$ denote the finetuned model, $\mathbf{W}_{\text{RT}}$ the retrained (target) model trained from scratch on the retain set, and $\mathbf{W}_{\text{UL}}$ the model undergoing unlearning.  We access to an original training set $\mathcal{S}$ drawn from distribution $\mathcal{D}$, which is later decomposed into a \emph{retain} set $\mathcal{R}$ and a \emph{forget} set $\mathcal{F}$ after finetuning, with $\mathcal{R}=\mathcal{S}\setminus\mathcal{F}$ and $|\mathcal{R}|>|\mathcal{F}|$.  For any model $\mathbf{W}$, we denote by $h_\mathbf{W}(x)\in\mathbb{R}^{T\times d}$ the hidden-state matrix of $\mathbf{W}$ for input $x$ with $T$ tokens and $d$ hidden dimension.  We define $\mathrm{Pool}(h,m)=\sum_t m_t h_t/ \sum_t m_t$ to be mean pooling of hidden states with attention mask $m$ and token position index $t$.  For brevity, we write $\zeta_\mathbf{W}(x)= \mathrm{norm}\!\bigl(\mathrm{Pool}(h_\mathbf{W}(x),m(x))\bigr)\in \mathbb{S}^{d-1}$ for the $\ell_2$–normalized embedding (unit sphere) and use cosine similarity $s(u,v)=u^\top v$.

\subsection{Separating Forget Concepts}
We draw inspiration from recent advances in contrastive representation learning that emphasize \emph{alignment} (bringing positive pairs close) and \emph{uniformity} (spreading all representations)~\cite{wang2020understanding}.  For each forget example $x_f\in\mathcal{F}$ we construct a positive pair $(z_f,z_f^+)$ via light augmentations:
\[
\begin{aligned}
    z_f &= \mathrm{Pool}(\mathrm{Dropout}(h_{\text{UL}}(x_f), p),m(x)), \\
    z_f^{+} &= \mathrm{Pool}(\mathrm{Dropout}(h_{\text{UL}}(\mathrm{Paraphrase}(x_f)), p^\prime),m(x)).
\end{aligned}
\]
Here $p,p'\sim\mathcal{N}(\mu,\sigma)$ are independently sampled dropout rates which help differentiate $z_f$ from $z_f^{+}$~\cite{gao2021simcse}. In practice we pick $\mathcal{N}(0.1,0.05)$ and clamp $p$ to $[0,0.2]$ to add randomness as data augmentation. The paraphrase of $x_f$ is precomputed by a language model; if paraphrasing is unavailable we simply set $x_f^+=x_f$.  For each forget item $x_f$ and retain item $x_r\in\mathcal{R}$ we also form a negative pair $(z_f,z_r^-)$ where
\[
z_r^- = \mathrm{Pool}(h_{\text{UL}}(x),m(x)) \text{, without dropout}.
\]
A \emph{DPO-style contrastive loss} encourages the positive pair to have higher similarity than the negative pair:
\begin{equation}
    \mathcal{L}_{\text{CL}}^{\text{dpo}} = -\frac{2\tau}{B}\sum_{i=1}^B\log \sigma \left(\frac{s(z_f, z_f^+) - s(z_f, z_r^-)}{\tau}\right),
\end{equation}
where $\sigma$ is the sigmoid, $B$ is the batch size and the temperature $\tau>0$ controls hardness of negatives.  Alternatively, a standard \emph{InfoNCE loss} $\mathcal{L}_{\text{CL}}^{\text{info}}$~\cite{oord2018representation} can be used: the logits concatenate the positive similarity and all cross-retain similarities, and a cross-entropy loss identifies the positive as the correct one. Both forms can be symmetrized by swapping anchor/negative roles (retain vs.\ forget).

\paragraph{Combined objective.}  Our CLReg acts as a regularizer on top of any base unlearning algorithm.  Suppose $\mathcal{L}_{\text{forget}}$ and $\mathcal{L}_{\text{retain}}$ denote the forget and retain losses (e.g.\ SimNPO for $\mathcal{L}_{\text{forget}}$ and cross-entropy for $\mathcal{L}_{\text{retain}}$).  We minimize
\begin{equation}
    \mathcal{L} = \alpha\mathcal{L}_{\text{retain}} + \gamma\mathcal{L}_{\text{forget}} + \lambda\mathcal{L}_{\text{CL}},
\end{equation}
with hyperparameters $\alpha,\gamma,\lambda\ge 0$.  The CL term shapes the representation space while $\mathcal{L}_{\text{forget}}$ unlearns $\mathcal{F}$ and $\mathcal{L}_{\text{retain}}$ preserves $\mathcal{R}$. By intuition, $\mathcal{F}$-specific knowledge are less fundamental, higher-level features. We thus perform CLReg in later feature layers such as last layer to maximize effectiveness.

\subsection{Theoretical Insights}
Contrastive learning theory posits that optimizing objectives implicitly maximizes two quantities: \emph{alignment} of positive pairs and \emph{uniformity} (or separation) among all representations on the unit hypersphere.  We formalize how these properties reduce the entanglement of forget and retain features which can lead to easier unlearning.
\begin{definition}
\label{def:entanglement}
    \textbf{(Distributional separation and entanglement)}. Let $\zeta_\theta(x)\in\mathbb{R}^{d}$ denote the embedding of an input $x$ under parameters $\theta$ (pooled hidden states and normalized). Let
    \[
    \mathcal{P}_{\mathcal{F}}^\theta=\mathrm{Law}\bigl(\zeta_\theta(x)\;|\;x\in\mathcal{F}\bigr), \mathcal{P}_{\mathcal{R}}^\theta= \mathrm{Law}\bigl(\zeta_\theta(x)\;|\;x\in\mathcal{R}\bigr),
    \]
    denote the induced distributions of embeddings from forget and retain sets.  A \emph{separation measure} between $(\mathcal{P}_{\mathcal{F}}^\theta,\mathcal{P}_{\mathcal{R}}^\theta)$ is any probability metric $D$ such that $D(\mathcal{P}_{\mathcal{F}}^\theta,\mathcal{P}_{\mathcal{R}}^\theta)=0$ if and only if $\mathcal{P}_{\mathcal{F}}^\theta=\mathcal{P}_{\mathcal{R}}^\theta$. High separation $D(\mathcal{P}_{\mathcal{F}}^\theta,\mathcal{P}_{\mathcal{R}}^\theta)$ corresponds to low entanglement, whereas low separation (or high overlap) indicates that the representations of $\mathcal{F}$ and $\mathcal{R}$ are intertwined.
\end{definition}
\begin{proposition}
\label{prop:cl}
    \textbf{(Anchor update for DPO-CL)}. Consider a single $\mathcal{L}_{\text{CL}}^{\text{dpo}}$ term with $i$-th forget sample and $j$-th retain sample
    \[
    \ell_{ij}(\theta)
    = -2\tau \log \sigma\!\Bigl(\tfrac{m_{ij}}{\tau}\Bigr),
    m_{ij} = s(a_i,p_i) - s(a_i,n_j),
    \]
    where $a_i=\zeta_\theta(x_{f_i})$ is the anchor (forget embedding), $p_i=\zeta_\theta(x_{f_i}^+)$ is its positive (paraphrased or dropout-augmented), $n_j=\zeta_\theta(x_{r_j})$ is a negative (retain embedding), and $s(u,v)=u^\top v$.  Suppose $a_i,p_i,n_j\in\mathbb{S}^{d-1}$.  The gradient of $\ell_{ij}$ with respect to $a_i$ is
    \[
    \nabla_{a_i}\ell_{ij}
    = -2\,\sigma\!\Bigl(-\tfrac{m_{ij}}{\tau}\Bigr)\,\bigl(p_i - n_j\bigr).
    \]
    In particular, a gradient descent step $a_i^{\prime} = a_i - \eta\,\nabla_{a_i}\ell_{ij}$ (with small $\eta>0$) moves $a_i$ \emph{toward} $p_i$ and \emph{away from} $n_j$.  If $p_i\neq n_j$ and $\eta>0$, then ${a_i^{\prime}}^\top n_j \;<\; a_i^\top n_j$, so the anchor--negative cosine similarity strictly decreases.
\end{proposition}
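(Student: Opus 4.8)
The plan is to differentiate $\ell_{ij}$ by the chain rule, read off the sign structure of the resulting gradient, and then deduce the two geometric consequences. First I would observe that, viewed as a function of the anchor $a_i$ alone (treating the positive $p_i$ and negative $n_j$ as fixed, which is precisely the ``anchor update'' viewpoint of the proposition), the margin is linear in $a_i$: since $m_{ij} = a_i^\top p_i - a_i^\top n_j = a_i^\top(p_i - n_j)$, we have $\nabla_{a_i} m_{ij} = p_i - n_j$. Writing $\phi(x) = -2\tau\log\sigma(x/\tau)$ so that $\ell_{ij} = \phi(m_{ij})$, and using the standard identity $\tfrac{d}{dx}\log\sigma(x) = 1-\sigma(x) = \sigma(-x)$, we get $\phi'(x) = -2\,\sigma(-x/\tau)$. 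The chain rule then yields
\[
\nabla_{a_i}\ell_{ij} = \phi'(m_{ij})\,\nabla_{a_i} m_{ij} = -2\,\sigma\!\bigl(-\tfrac{m_{ij}}{\tau}\bigr)\,(p_i - n_j),
\]
which is the claimed formula. (If one instead carries the full dependence of $p_i$ and $n_j$ on a shared parameter $\theta$, this expression is exactly the anchor-only component of the gradient; the proposition isolates that component.)

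Next I would substitute into the descent step: $a_i' = a_i - \eta\,\nabla_{a_i}\ell_{ij} = a_i + c\,(p_i - n_j)$ with $c := 2\eta\,\sigma(-m_{ij}/\tau)$. Because the sigmoid is strictly positive everywhere, $c>0$ for every $\eta>0$, so $a_i'$ is obtained from $a_i$ by adding a strictly positive multiple of $p_i - n_j$, i.e.\ it moves toward $p_i$ and away from $n_j$. Here the hypothesis $p_i\neq n_j$ is exactly what ensures the displacement is nonzero (otherwise the gradient vanishes and $a_i'=a_i$).

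It then remains to prove the strict decrease of the anchor--negative similarity. Using $n_j\in\mathbb{S}^{d-1}$,
\[
{a_i'}^\top n_j - a_i^\top n_j = c\,(p_i - n_j)^\top n_j = c\,\bigl(p_i^\top n_j - \|n_j\|^2\bigr) = c\,\bigl(p_i^\top n_j - 1\bigr).
\]
By Cauchy--Schwarz, $p_i^\top n_j \le \|p_i\|\,\|n_j\| = 1$, with equality iff $p_i$ and $n_j$ are positively collinear unit vectors, i.e.\ iff $p_i = n_j$. Since we assume $p_i\neq n_j$, we get $p_i^\top n_j - 1 < 0$, and as $c>0$ the whole difference is strictly negative, so ${a_i'}^\top n_j < a_i^\top n_j$ — and this holds for all $\eta>0$, not merely small ones.

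Honestly there is no deep obstacle here; the computation is routine. The one point I would flag is a modeling subtlety rather than a technical one: the updated anchor $a_i'$ generally leaves the sphere ($\|a_i'\|\neq 1$), so the raw inner product ${a_i'}^\top n_j$ is not literally the cosine similarity of the renormalized embedding $a_i'/\|a_i'\|$. The inner-product inequality above is exactly what the proposition asserts and needs no qualifier; to lift it to a statement about the renormalized similarity one expands $\|a_i'\| = 1 + c\,a_i^\top(p_i - n_j) + O(\eta^2)$, which introduces a first-order correction term $-(a_i^\top n_j)\,a_i^\top(p_i - n_j)$ and then requires the ``small $\eta$'' hypothesis. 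I would therefore present the clean, normalization-agnostic version via the three displays above, and remark on the renormalized version separately.
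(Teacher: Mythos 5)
Your proof is correct and follows essentially the same route as the paper's: chain rule through the sigmoid to get the gradient, write the explicit descent step, dot with $n_j$, and invoke $\|n_j\|=1$ with Cauchy--Schwarz (equality iff $p_i=n_j$) to conclude the strict decrease. Your additional remark that the inner-product inequality holds for all $\eta>0$ and that interpreting it as a cosine similarity of the renormalized anchor would require the small-$\eta$ hypothesis is a fair observation the paper leaves implicit, but it does not change the argument.
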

\begin{proof}
Write $u=m_{ij}/\tau$ and compute
\[
\partial(-2\tau\log\sigma(u))/\partial u=-2(1-\sigma(u))=-2\sigma(-u).
\]
By chain rule,
\begin{align}
    \nabla_{a_i}\ell_{ij}
    &= -2\,\sigma\!\Bigl(-\tfrac{m_{ij}}{\tau}\Bigr)\,\nabla_{a_i} m_{ij} \\
    &= -2\,\sigma\!\Bigl(-\tfrac{m_{ij}}{\tau}\Bigr)\,\bigl(p_i - n_j\bigr),
\end{align}
because $\nabla_{a_i}(a_i^\top p_i) = p_i$ and $\nabla_{a_i}(a_i^\top n_j)=n_j$.  Updating $a_i$ by a small step in $-(\nabla_{a_i}\ell_{ij})$ yields
\[
a_i^{\prime} = a_i + 2\eta\,\sigma\!\Bigl(-\tfrac{m_{ij}}{\tau}\Bigr)\,(p_i-n_j).
\]
Taking the dot product with $n_j$:
\(
{a_i^{\prime}}^\top n_j
= a_i^\top n_j + 2\eta\,\sigma(-m_{ij}/\tau)\,(p_i^\top n_j - \|n_j\|^2)
\). Since $\|n_j\|^2=1$ (normalized) and $p_i^\top n_j \le 1$ (Cauchy–Schwarz), with strict inequality when $p_i\ne n_j$, the increment is negative.  Thus, ${a_i^{\prime}}^\top n_j < a_i^\top n_j$ whenever $p_i\neq n_j$.
\end{proof}
\begin{corollary}
\label{coro:similarity}
    \textbf{(One-step decrease of cross-similarity)}. Under the same setup as Proposition~\ref{prop:cl}, consider the expected cross-similarity (linear kernel overlap)
    \[
    C_{\mathrm{lin}}(\theta)
    =\mathbb{E}\bigl[a^\top n\bigr],
    \]
    where $a=\zeta_\theta(x_f)$ and $n=\zeta_\theta(x_r)$ for independent $x_f\in\mathcal{F}$ and $x_r\in\mathcal{R}$.  If a single gradient step on $\ell_{ij}$ updates $a_i$ to $a_i^{\prime}$ while leaving $p_i$ and all $n_j$ fixed, then with updated parameters $\theta^{\prime}$,
    \[
    C_{\mathrm{lin}}(\theta^{\prime}) \;\le\; C_{\mathrm{lin}}(\theta),
    \]
    with strict inequality if $p_i\neq n_j$ for any updated pair. Hence, the DPO-CL update strictly reduces the expected anchor–negative similarity.
\end{corollary}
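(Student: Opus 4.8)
The plan is to reduce everything to the one-step anchor identity from Proposition~\ref{prop:cl} and then bound its effect on the finite average $C_{\mathrm{lin}}$. Since $\mathcal{F}$ and $\mathcal{R}$ are finite, I would first write $C_{\mathrm{lin}}(\theta)=\frac{1}{|\mathcal{F}|\,|\mathcal{R}|}\sum_{f\in\mathcal{F}}\sum_{r\in\mathcal{R}}\zeta_\theta(x_f)^\top\zeta_\theta(x_r)$, so that the change $C_{\mathrm{lin}}(\theta')-C_{\mathrm{lin}}(\theta)$ is an honest finite sum of increments. The update in Proposition~\ref{prop:cl} moves only the anchor $a_i=\zeta_\theta(x_{f_i})$ while every positive $p_i$ and every negative $n_k=\zeta_\theta(x_{r_k})$ is held fixed; hence all summands not containing $a_i$ cancel, and the difference collapses to $\frac{1}{|\mathcal{F}|\,|\mathcal{R}|}\sum_k\bigl((a_i')^\top n_k-a_i^\top n_k\bigr)$.

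Next I would substitute the closed form $a_i'=a_i+2\eta\,\sigma(-m_{ij}/\tau)\,(p_i-n_j)$ supplied by Proposition~\ref{prop:cl}. Setting $c=2\eta\,\sigma(-m_{ij}/\tau)>0$, each increment is $c\,(p_i-n_j)^\top n_k$, and summing over $k$ gives $c\,(p_i-n_j)^\top\bar n$ with $\bar n=\sum_k n_k$. The single summand $k=j$ equals $c\bigl(p_i^\top n_j-\|n_j\|^2\bigr)=c\bigl(p_i^\top n_j-1\bigr)$, which is strictly negative whenever $p_i\neq n_j$ by Cauchy--Schwarz on $\mathbb{S}^{d-1}$ --- this is exactly the strict anchor--negative decrease already proven in Proposition~\ref{prop:cl}, so if one reads $C_{\mathrm{lin}}$ as the contribution of the updated pair the corollary is immediate.

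The \emph{main obstacle} is the off-pair terms $k\neq j$: the sign of $p_i^\top n_k-n_j^\top n_k$ is not fixed, so the naive claim that ``only $a_i^\top n_j$ moves'' is false and one cannot conclude monotonicity of the full average for free. I would resolve this by writing the aggregate increment as $c\,(p_i-n_j)^\top\bar n$ with $\bar n\propto\mu_{\mathcal{R}}$ the (unnormalized) retain centroid, and invoking the mild non-degeneracy hypothesis implicit in the setup --- that a retain point is at least as aligned with the retain centroid as an augmented forget point is, i.e.\ $p_i^\top\mu_{\mathcal{R}}\le n_j^\top\mu_{\mathcal{R}}$ (equivalently, the dropout/paraphrase positive has not drifted into the retain cluster). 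Under this the aggregate is $\le 0$, giving $C_{\mathrm{lin}}(\theta')\le C_{\mathrm{lin}}(\theta)$, with strict inequality as soon as $p_i\neq n_j$ for some updated pair since the $k=j$ term is then strictly negative and the remainder nonpositive. Two alternative closings I would mention: a first-order argument for small $\eta$ in which the targeted pair dominates, or use of the symmetrized contrastive loss (negatives updated as well) so the cross terms telescope; I would present the centroid-alignment version as cleanest. Finally, the batched statement follows by summing the per-anchor increments, each nonpositive and at least one strictly negative.
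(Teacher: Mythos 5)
Your diagnosis of the off-pair terms is sharper than what the paper actually does, but your main closing step proves a different statement. The paper's proof is a one-liner: it averages the per-pair inequality ${a_i^{\prime}}^\top n_j \le a_i^\top n_j$ from Proposition~\ref{prop:cl} over the sampled indices $(i,j)$ only --- in effect it reads $C_{\mathrm{lin}}$ as the mean similarity over the updated anchor--negative pairs, so the terms $k\neq j$ that worry you never enter. You instead read $C_{\mathrm{lin}}$ literally as the full cross-average over $\mathcal{F}\times\mathcal{R}$, and you are right that under this reading Proposition~\ref{prop:cl} alone does not control the increments $c\,(p_i-n_j)^\top n_k$ for $k\neq j$. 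But your fix --- assuming $p_i^\top\mu_{\mathcal{R}}\le n_j^\top\mu_{\mathcal{R}}$ --- is an additional hypothesis that appears nowhere in the corollary or in the setup of Proposition~\ref{prop:cl}, so what you have proven is a conditional variant, not the statement as written. That is a genuine gap relative to the task, even though it exposes a real looseness in the statement itself.

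Your fallback of a first-order argument for small $\eta$ in which the targeted pair dominates also does not work: every increment, on-pair and off-pair, carries the same factor $c=2\eta\,\sigma(-m_{ij}/\tau)$, so all of them scale linearly in $\eta$ and shrinking the step does not make the $k=j$ term dominate. The symmetrized-loss suggestion changes the objective rather than proving the stated claim. If you want to match the paper, state explicitly that the expectation is taken over the sampled (updated) anchor--negative pairs; then your first two steps --- restricting the difference to the updated anchor, substituting the closed-form update, and noting the $k=j$ increment $c\bigl(p_i^\top n_j-1\bigr)<0$ by Cauchy--Schwarz on the unit sphere --- already complete the argument, and averaging over updated pairs gives the strict inequality whenever some pair has $p_i\neq n_j$.
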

\begin{proof}
Averaging the inequality ${a_i^{\prime}}^\top n_j \le a_i^\top n_j$ from Proposition~\ref{prop:cl} over the sampled indices $(i,j)$ yields $C_{\mathrm{lin}}$ non-increasing.  If at least one updated pair has $p_i\neq n_j$, the inequality is strict.
\end{proof}
\begin{proposition}
\label{prop:separate}
    \textbf{(Increase of separation under CLReg)}. Let $D$ be any separation measure between distributions satisfying the following:
    \begin{itemize}
        \item There exists a continuous cost function $c:\mathbb{R}^d\times \mathbb{R}^d\to\mathbb{R}$ such that $D(\mathcal{P},\mathcal{Q})$ is a non-decreasing function of the expected cross-cost $\mathbb{E}_{u\sim \mathcal{P},\,v\sim \mathcal{Q}}\![\,c(u,v)\,]$; that is,
        \(
        c(u,v_1)\le c(u,v_2)\;\text{implies}\;D(P,\delta_{v_1})\le D(P,\delta_{v_2}),
        \)
        where $\delta_v$ denotes the Dirac measure at $v$.
        \item The cost $c$ is strictly increasing with respect to the anchor–negative similarity: if $s(u_1,v)<s(u_2,v)$ then $c(u_1,v) > c(u_2,v)$.
    \end{itemize}
    Then a gradient descent step on $\mathcal{L}_{\text{CL}}^{\text{dpo}}$ reduces $\mathbb{E}[s(a,n)]$ and thereby \emph{increases} $D(\mathcal{P}_{\mathcal{F}}^\theta,\mathcal{P}_{\mathcal{R}}^\theta)$:
    \[
    D\bigl(\mathcal{P}_{\mathcal{F}}^{\theta^{\prime}},\,\mathcal{P}_{\mathcal{R}}^{\theta^{\prime}}\bigr)
    \;\ge\;
    D\bigl(\mathcal{P}_{\mathcal{F}}^{\theta},\,\mathcal{P}_{\mathcal{R}}^{\theta}\bigr),
    \]
    with strict increase when at least one updated anchor has $p_i\neq n_j$.
\end{proposition}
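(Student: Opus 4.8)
The plan is to chain three monotone implications: a gradient step on $\mathcal{L}_{\text{CL}}^{\text{dpo}}$ lowers the anchor--negative cosine similarity (this is already Proposition~\ref{prop:cl} and Corollary~\ref{coro:similarity}); lowering that similarity raises the pointwise cost $c$ (the second hypothesis); and raising the expected cross-cost raises $D$ (the first hypothesis). To stay within the hypotheses I would first adopt exactly the idealization of Corollary~\ref{coro:similarity}: the step on $\ell_{ij}$ modifies the anchor embedding $a_i\mapsto a_i'$ while the positives $p_i$ and all negatives $n_j$ are held fixed, so $\mathcal{P}_{\mathcal{R}}^{\theta}$ is unchanged and $\mathcal{P}_{\mathcal{F}}^{\theta}$ moves only through the updated anchor(s).

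Then I would argue in three steps. \emph{Step 1.} By Proposition~\ref{prop:cl}, ${a_i'}^\top n_j\le a_i^\top n_j$ for every updated pair, strictly when $p_i\neq n_j$; averaging as in Corollary~\ref{coro:similarity} gives $C_{\mathrm{lin}}=\mathbb{E}[s(a,n)]$ non-increasing, and strictly decreasing if at least one updated anchor has $p_i\neq n_j$. \emph{Step 2.} Fix a negative $v=n_j$. From $s(a_i',v)\le s(a_i,v)$ the second hypothesis yields $c(a_i',v)\ge c(a_i,v)$, with strict inequality whenever the similarity strictly dropped; anchors that were not updated keep their costs. Taking expectations over $x_f\in\mathcal{F}$ and the fixed $x_r\in\mathcal{R}$, the expected cross-cost $\mathbb{E}_{u\sim\mathcal{P}_{\mathcal{F}},\,v\sim\mathcal{P}_{\mathcal{R}}}[c(u,v)]$ is non-decreasing under the step, and strictly increasing if some updated anchor has $p_i\neq n_j$ (that pair contributes a strict increase on a positive-probability event while no other pair decreases). \emph{Step 3.} By the first hypothesis $D$ is a non-decreasing function of this expected cross-cost; since $\mathcal{P}_{\mathcal{R}}^{\theta}$ did not move and the cross-cost did not decrease, $D(\mathcal{P}_{\mathcal{F}}^{\theta'},\mathcal{P}_{\mathcal{R}}^{\theta'})\ge D(\mathcal{P}_{\mathcal{F}}^{\theta},\mathcal{P}_{\mathcal{R}}^{\theta})$, which is the claimed inequality.

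For the strict version, the increase in cross-cost from Step 2 transfers to $D$ provided the monotone link between $D$ and the expected cross-cost is itself strict --- which holds for the linear-kernel overlap $C_{\mathrm{lin}}$ and, more generally, for integral-probability-style separation measures whose only coupling term between $\mathcal{F}$ and $\mathcal{R}$ is $\mathbb{E}[c(u,v)]$; I would state this as the regime in which strictness is guaranteed. I expect the main obstacle to be exactly these two soft spots: (i) strictness needs strict (not merely non-decreasing) monotonicity of the $D$-to-cross-cost map, and (ii) a genuine parameter update perturbs $\zeta_\theta$ on \emph{all} inputs at once, so the clean bookkeeping of Step 2 presumes either the pushforward-through-the-updated-anchor reading above or an additional first-order argument bounding the off-anchor perturbations by $O(\eta^2)$ so they cannot overturn the $O(\eta)$ decrease. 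I would flag both and carry out the argument under option (i) together with the anchor-pushforward reading, matching the level of rigor of Proposition~\ref{prop:cl} and Corollary~\ref{coro:similarity}.
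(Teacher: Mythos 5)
Your proof follows the same three-step chain as the paper's own argument: invoke Corollary~\ref{coro:similarity} for the decrease of $\mathbb{E}[s(a,n)]$, use the second hypothesis to convert this into an increase of the expected cross-cost, and use the first hypothesis to pass to $D$. The two caveats you flag (strictness of the $D$-to-cross-cost link, and the anchor-only idealization of the parameter update) are in fact glossed over by the paper's proof as well, so your version is, if anything, slightly more careful while remaining essentially the same argument.
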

\begin{proof}
Corollary~\ref{coro:similarity} guarantees that a DPO-CL update decreases $\mathbb{E}[s(a,n)]$, i.e., anchors are less aligned with negatives.  By assumption, $c(u,v)$ increases strictly when similarity $s(u,v)$ decreases. Therefore, $\mathbb{E}[c(a,n)]$ strictly increases.  Condition (1) ensures that $D$ is a non-decreasing function of $\mathbb{E}[c(a,n)]$. Consequently, after the update, $D(\mathcal{P}_{\mathcal{F}}^{\theta^{\prime}},\mathcal{P}_{\mathcal{R}}^{\theta^{\prime}})$ is no less than before.  When at least one anchor–negative pair is strictly repelled, $\mathbb{E}[c(a,n)]$ increases strictly, leading to $D$ strictly increasing.
\end{proof}
These formal results complement empirical findings: alignment and uniformity analysis demonstrates that contrastive objectives cluster positive samples and separate negatives, and recent unlearning research links entanglement to difficulty in selective forgetting~\cite{zhao2024makes}. Our theoretical propositions show that CLReg reduces entanglement and thereby providing a principled rationale for its efficacy.
\section{Experiment}
\begin{table*}[ht!]
\centering
\small
\begin{subtable}{\linewidth}
\centering
\resizebox{\linewidth}{!}{%
\begin{tabular}{l|cccc|cccc}
\toprule
\shortstack[c]{\textbf{TOFU}\\\textbf{Llama-3-8B}} & \shortstack[c]{Extraction\\Strength$\downarrow$} & \shortstack[c]{Forget QA\\Prob$\downarrow$} & \shortstack[c]{Forget QA\\ROUGE$\downarrow$} & \shortstack[c]{Forget\\Quality$\uparrow$} & \shortstack[c]{Forget Score$\uparrow$} & \shortstack[c]{Model Utility$\uparrow$} & \shortstack[c]{\textbf{Unlearning} \textbf{Score}$\uparrow$} & \shortstack[c]{Privacy Leak$\rightarrow$0} \\
\midrule
GradDiff     &0.06646	&0.00498	&0.28313	&2.63E-10	&0.86826	&0.53938	&0.66540	&51.03416\\
UNDIAL       &0.06123	&0.20657	&0.30030	&4.64E-12	&0.80476	&0.62101	&0.70104	&-78.91192\\
NPO          &0.12513	&0.30622	&0.39238	&1.37E-07	&0.83696	&0.66680	&0.74225	&-69.79255\\
SimNPO       &0.12980	&0.10784	&0.39639	&4.46E-06	&0.91079	&0.67261	&0.77378	&-48.65591\\
PDU          &0.06233	&0.04992	&0.19536	&4.36E-09	&0.89069	&0.57892	&0.70173	&45.72288\\
\midrule
GradDiff+CL     &0.14964	&0.32726	&0.39953	&9.91E-11	&0.77768	&\cellcolor{green!15}0.67359	&\cellcolor{green!15}0.72191	&\cellcolor{green!15}-48.86873\\
UNDIAL+CL       &0.06201	&0.22942	&0.31487	&9.34E-13	&0.78446	&\cellcolor{green!15}0.66806	&\cellcolor{green!15}0.72159	&\cellcolor{green!15}-76.47375\\
NPO+CL          &0.11762	&0.28935	&0.38650	&1.30E-05	&\cellcolor{green!15}0.87003	&\cellcolor{green!15}0.68487	&\cellcolor{green!15}0.76643	&\cellcolor{green!15}-56.01587\\
SimNPO+CL       &0.04593	&0.01815	&0.11235	&0.00229	&\cellcolor{green!15}\textbf{0.97182}	&\cellcolor{green!15}\textbf{0.69815}	&\cellcolor{green!15}\textbf{0.81256}	&55.03337\\
PDU+CL          &0.06375	&0.07160	&0.22932	&6.78E-07	&\cellcolor{green!15}0.92574	&\cellcolor{green!15}0.58539	&\cellcolor{green!15}0.71724	&\cellcolor{green!15}\textbf{29.14540}\\
\bottomrule
\end{tabular}}
\caption{TOFU unlearning experiment results for Llama-3-8B. For the last four columns, bold indicates the best in-column, and green shades indicate improvement. CLReg consistently improves overall unlearning score and model utility. SimNPO+CL achieves the best performance, and GradDiff+CL achieves the largest improvement. In most cases, CLReg brings privacy leak closer to $0$, achieving better balance.}
\label{tab:tofu-8b}
\end{subtable}
\vspace{-5pt}
\begin{subtable}{\linewidth}
\centering
\resizebox{\linewidth}{!}{%
\begin{tabular}{l|cccc|cccc}
\toprule
\shortstack[c]{\textbf{TOFU}\\\textbf{Llama-3-3B}} & \shortstack[c]{Extraction\\Strength$\downarrow$} & \shortstack[c]{Forget QA\\Prob$\downarrow$} & \shortstack[c]{Forget QA\\ROUGE$\downarrow$} & \shortstack[c]{Forget\\Quality$\uparrow$} & \shortstack[c]{Forget Score$\uparrow$} & \shortstack[c]{Model Utility$\uparrow$} & \shortstack[c]{\textbf{Unlearning} \textbf{Score}$\uparrow$} & \shortstack[c]{Privacy Leak$\rightarrow$0} \\
\midrule
GradDiff     &0.08236	&0.03150	&0.34287	&7.83E-12	&0.80783	&0.59254	&0.68364	&-2.41669\\
UNDIAL       &0.06051	&0.18277	&0.23129	&1.73E-15	&0.67538	&0.56827	&0.61721	&-86.86811\\
NPO          &0.12257	&0.35523	&0.41255	&2.77E-09	&0.77438	&0.61585	&0.68607	&-75.13616\\
SimNPO       &0.11093	&0.11723	&0.38107	&6.83E-09	&0.85847	&0.61040	&0.71348	&-61.99405\\
PDU          &0.07656	&0.17840	&0.27305	&6.39E-06	&0.90204	&0.50306	&0.64590	&-42.08622\\
\midrule
GradDiff+CL     &0.06042	&0.10162	&0.31587	&0.52341	&\cellcolor{green!15}\textbf{0.98902}	&\cellcolor{green!15}0.63042	&\cellcolor{green!15}0.77001	&6.36311\\
UNDIAL+CL       &0.06137	&0.20976	&0.26059	&5.14E-16	&0.65019	&\cellcolor{green!15}0.61893	&\cellcolor{green!15}0.63417	&\cellcolor{green!15}-84.90512\\
NPO+CL          &0.09906	&0.28907	&0.38257	&6.78E-07	&\cellcolor{green!15}0.84701	&\cellcolor{green!15}0.63707	&\cellcolor{green!15}0.72719	&\cellcolor{green!15}-64.04083\\
SimNPO+CL       &0.06353	&0.10866	&0.32609	&0.00383	&\cellcolor{green!15}0.96209	&\cellcolor{green!15}\textbf{0.66531}	&\cellcolor{green!15}\textbf{0.78664}	&\cellcolor{green!15}\textbf{-3.72569}\\
PDU+CL          &0.06178	&0.08528	&0.26582	&2.57E-05	&\cellcolor{green!15}0.93845	&\cellcolor{green!15}0.52995	&\cellcolor{green!15}0.67738	&\cellcolor{green!15}18.51810\\
\bottomrule
\end{tabular}}
\caption{TOFU unlearning experiment results for Llama-3-3B. For the last four columns, bold indicates the best in-column, and green shades indicate improvement. CLReg consistently improves overall unlearning scores and model utility. SimNPO+CL achieves the best performance. In most cases, CLReg brings privacy leak closer to $0$. Impactfully, SimNPO+CL reduces the original absolute privacy leak value $61.994$ to $3.725$.}
\label{tab:tofu-3b}
\end{subtable}
\caption{TOFU unlearning experiment for Llama-3-8B and 3B models.}
\vspace{-10pt}
\label{tab:tofu-all}
\end{table*}
\subsection{Unlearning Setup}
We conduct unlearning experiments on TOFU~\cite{maini2024tofu} and MUSE~\cite{shi2024muse} benchmarks, where on TOFU we unlearn LLMs of different sizes (Llama-3.1-8B and Llama-3.2-3B~\cite{grattafiori2024llama}), and on MUSE we experiment unlearning both Books and News datasets with Llama-2-7B~\cite{touvron2023llama}. Given the finetuned model $\mathbf{W}_{\text{FT}}$, we unlearn it for $10$ epochs with lr$=10^{-5}$ to obtain $\mathbf{W}_{\text{UL}}$. We unlearn with GradDiff, NPO, SimNPO, UnDIAL, PDU~\cite{zhang2024negative,fan2024simplicity,dong2024undial,entesari2025constrained}. We first tune method-specific hyper-parameters and $\gamma$ for each method for optimal performance as baselines, then tune CLReg-specific parameters when being applied: $\tau\times[\text{symmetric, non-symmetric}]\times[\mathcal{L}_{\text{CL}}^{\text{dpo}},\mathcal{L}_{\text{CL}}^{\text{info}}]$. We fix $\alpha,\lambda=1$ to ease hyper-parameter search. We empirically find that CLReg can improve base unlearning method with light parameter sweep. See Supp.~\ref{supp:exp-settings} for detailed settings.

\subsection{Evaluation}
In addition to adopting evaluation metrics from TOFU and MUSE, we propose \textbf{Forget Score}$\uparrow$ that maps each forget metric of $\mathbf{W}_{\text{UL}}$ as a progress measure from $\mathbf{W}_{\text{FT}}$ to $\mathbf{W}_{\text{RT}}$: the more unlearned the $\mathbf{W}_{\text{UL}}$ is, the more similar performance it is expected to share with $\mathbf{W}_{\text{RT}}$ on $\mathcal{F}$. For each forget metric $m$, first convert it to a progress measure:
\begin{equation}
    \mathrm{Prog}(m, \mathcal{F}) = \frac{|m(f_{\text{UL}},\mathcal{F}) - m(f_{\text{FT}},\mathcal{F})|}{|m(f_{\text{RT}},\mathcal{F}) - m(f_{\text{FT}},\mathcal{F})|},
\end{equation}
which will be clipped at $1$ when outperforming $\mathbf{W}_{\text{RT}}$. Note that the \texttt{ForgetQuality}~\cite{maini2024tofu} spans many orders of magnitude, and is usually close to zero, we take $\log(\cdot)$ on it to better address the small differences. Given $K$ evaluation metrics to measure different aspects of forgetting, we compute an overall \texttt{ForgetScore} analogous to \texttt{ModelUtility} as the harmonic mean:
\begin{equation}
    \mathrm{ForgetScore} = K\left(\sum_{k=1}^{K}\frac{1}{\mathrm{Prog}(m_k, \mathcal{F})}\right)^{-1}.
\end{equation}
Likewise, we can measure an overall \textbf{Unlearning Score}$\uparrow$ as the harmonic mean of \texttt{ForgetQuality} and \texttt{ModelUtility} (or \texttt{RetainKnowmemROUGE} on MUSE), emphasizing on balancing forgetting and retaining. Despite many of the metrics are privacy/leakage-aware already (e.g., \texttt{ForgetQuality})~\cite{dorna2025openunlearning}, we also report \texttt{PrivLeak}$\rightarrow0$ dedicated to privacy leakage~\cite{shi2024muse}, where positive suggests over-unlearning and negative suggests under-unlearning and is encouraged to approach zero when $\mathbf{W}_{\text{UL}}$ is well-balanced. See Supp.~\ref{supp:metrics-overview} for an overview of each evaluation metric.

\subsection{Shaping Representation Improves Unlearning}
\begin{table*}[tb]
\centering
\small
\begin{subtable}{\linewidth}
\centering
\resizebox{\linewidth}{!}{%
\begin{tabular}{l|cccc|cccc}
\toprule
\shortstack[c]{\textbf{MUSE-Books}\\\textbf{Llama-2-7B}} & \shortstack[c]{Exact\\Memoriation$\downarrow$} & \shortstack[c]{Extraction\\Strength$\downarrow$} & \shortstack[c]{Forget knowmem\\ROUGE$\downarrow$} & \shortstack[c]{Forget verbmem\\ROUGE$\downarrow$} & \shortstack[c]{Forget\\Score$\uparrow$} & \shortstack[c]{Retain knowmem\\ROUGE$\uparrow$} & \shortstack[c]{\textbf{Unlearning} \textbf{Score}$\uparrow$} & \shortstack[c]{Privacy\\Leak$\rightarrow$0} \\
\midrule
% --- MUSE-Books rows ---
GradDiff    & 0.93627	&0.15484	&0.32808	&0.35574	&0.31345	&0.59876	&0.41148	&-60.46598 \\
UNDIAL      & 0.94730	&0.18230	&0.38053	&0.42416	&0.25306	&\textbf{0.63058}	&0.36117	&-53.93861 \\
NPO         & 0.92421	&0.13040	&0.20335	&0.30689	&0.36774	&0.52260	&0.43170	&-51.25740 \\
SimNPO      & 0.20746	&0.00913	&0.33610	&0.00316	&0.94209	&0.60877	&0.73961	&-30.65828 \\
PDU         & 0.00968	&0.00794	&0.22401	&8.85E-05	&\textbf{1.0}	    &0.44479	&0.61572	&-35.96524 \\
\midrule
GradDiff+CL    &0.18952	&0.00865	&0.18392	&0.00540	&\cellcolor{green!15}\textbf{1.0}	    &0.58451	&\cellcolor{green!15}0.73778	&\cellcolor{green!15}-57.32249\\
UNDIAL+CL      &0.90897	&0.09333	&0.33523	&0.27550	&\cellcolor{green!15}0.41405	&0.57313	&\cellcolor{green!15}0.48077	&-57.56287\\
NPO+CL         &0.91302	&0.11452	&0.21475	&0.26978	&\cellcolor{green!15}0.41021	&\cellcolor{green!15}0.53345	&\cellcolor{green!15}0.46378	&-52.55178\\
SimNPO+CL      &0.01206	&0.00794	&0.27470	&0.0	    &\cellcolor{green!15}\textbf{1.0}	    &0.59918	&\cellcolor{green!15}\textbf{0.74936}	&\cellcolor{green!15}\textbf{-21.11686}\\
PDU+CL         &0.00944	&0.00794	&0.23983	&3.37E-04	&\cellcolor{green!15}\textbf{1.0}	    &\cellcolor{green!15}0.47345	&\cellcolor{green!15}0.64264	&\cellcolor{green!15}-34.26405\\
\bottomrule
\end{tabular}}
\caption{MUSE-Books Unlearning experiment results. For the last four columns, bold indicates the best in-column, and green shades indicate improvement. CLReg consistently improves overall unlearning scores and forget scores, and can outperform retrained models in forgetting. SimNPO+CL achieves the best performance, and GradDiff+CL achieves the largest improvement. Similar to TOFU results, CLReg does not degrade privacy leak or undermines unlearning balance, and bring it closer to $0$ for many cases.}
\label{tab:muse-books}
\end{subtable}
\vspace{-5pt}
\begin{subtable}{\linewidth}
\centering
\resizebox{\linewidth}{!}{%
\begin{tabular}{l|cccc|cccc}
\toprule
\shortstack[c]{\textbf{MUSE-News}\\\textbf{Llama-2-7B}} & \shortstack[c]{Exact\\Memoriation$\downarrow$} & \shortstack[c]{Extraction\\Strength$\downarrow$} & \shortstack[c]{Forget knowmem\\ROUGE$\downarrow$} & \shortstack[c]{Forget verbmem\\ROUGE$\downarrow$} & \shortstack[c]{Forget\\Score$\uparrow$} & \shortstack[c]{Retain knowmem\\ROUGE$\uparrow$} & \shortstack[c]{\textbf{Unlearning} \textbf{Score}$\uparrow$} & \shortstack[c]{Privacy\\Leak$\rightarrow$0} \\
\midrule
% --- MUSE-News rows ---
GradDiff     &0.83810	&0.06833	&0.53086	&0.24138	&0.48643	&0.46619	&0.47609	&\textbf{-87.84635}\\
UNDIAL       &0.75079	&0.03571	&0.28436	&0.20273	&0.84017	&0.36324	&0.50720	&-99.49622\\
NPO          &0.78690	&0.04421	&0.50151	&0.20549	&0.62377	&0.45399	&0.52550	&-92.69521\\
SimNPO       &0.84802	&0.07198	&0.54494	&0.24493	&0.44588	&\textbf{0.47777}	&0.46128	&-90.55416\\
PDU          &0.77754	&0.02984	&0.41276	&0.20177	&0.74393	&0.35294	&0.47875	&-99.72712\\
\midrule
GradDiff+CL     &0.34405	&0.01198	&0.42668	&0.03450	&\cellcolor{green!15}\textbf{0.89807}	&0.41004	&\cellcolor{green!15}0.56302	&99.68514\\
UNDIAL+CL       &0.75532	&0.03278	&0.27290	&0.21257	&0.83035	&\cellcolor{green!15}0.36937	&\cellcolor{green!15}0.51130	&\cellcolor{green!15}-99.47523\\
NPO+CL          &0.78722	&0.04500	&0.49930	&0.20703	&\cellcolor{green!15}0.62591	&\cellcolor{green!15}0.46569	&\cellcolor{green!15}0.53404	&\cellcolor{green!15}-92.54828\\
SimNPO+CL       &0.38992	&0.01413	&0.44140	&0.04043	&\cellcolor{green!15}0.87730	&0.42403	&\cellcolor{green!15}\textbf{0.57173}	&\cellcolor{green!15}88.65659\\
PDU+CL          &0.78397	&0.02786	&0.39836	&0.21609	&0.73958	&\cellcolor{green!15}0.37476	&\cellcolor{green!15}0.49745	&-99.72712\\
\bottomrule
\end{tabular}}
\caption{MUSE-News Unlearning experiment results. For the last four columns, bold indicates the best in-column, and green shades indicate improvement. CLReg consistently improves overall unlearning scores. SimNPO+CL achieves the best performance with slight over-unlearning despite improving $|\texttt{PrivLeak}|$.}
\label{tab:muse-news}
\end{subtable}
\caption{MUSE unlearning experiments across datasets (Books and News).}
\vspace{-10pt}
\label{tab:muse-all}
\end{table*}
We present detailed performance for all unlearning methods on TOFU and MUSE in Table~\ref{tab:tofu-all} and Table~\ref{tab:muse-all}. CLReg consistently enhances base unlearning methods across LLMs of different sizes and various data with improved \texttt{UnlearningScore}. While pushing away forget features and thus improving \texttt{ForgetScore} in most cases, CLReg can also help maintain model performance and retain knowledge as we observe it to improve \texttt{ModelUtility} for all methods in Table~\ref{tab:tofu-all}. From a privacy perspective, while CLReg can result in over-unlearning with positive \texttt{PrivLeak} in few cases, we do not observe a noticeable degradation in absolute $|\texttt{PrivLeak}|$, and in most cases CLReg can bring \texttt{PrivLeak} closer to $0$. This is in fact desired by design~\cite{shi2024muse}, as \texttt{PrivLeak}$\rightarrow0$ suggests ideal balance. Overall, we observe that SimNPO+CL achieves the best performance across all experiment settings, and CLReg steadily strengthens SimNPO and NPO. We hypothesize that their preference learning objectives share a larger overlap in optimization goals with CLReg than other methods since both objectives favor outputs based on retained knowledge than outputs based on forget knowledge.

\subsection{Disentangled Representation for Easier Unlearning}
\begin{figure*}[tb]
\centering
\begin{subfigure}[tb]{0.25\linewidth}
\includegraphics[width=\linewidth]{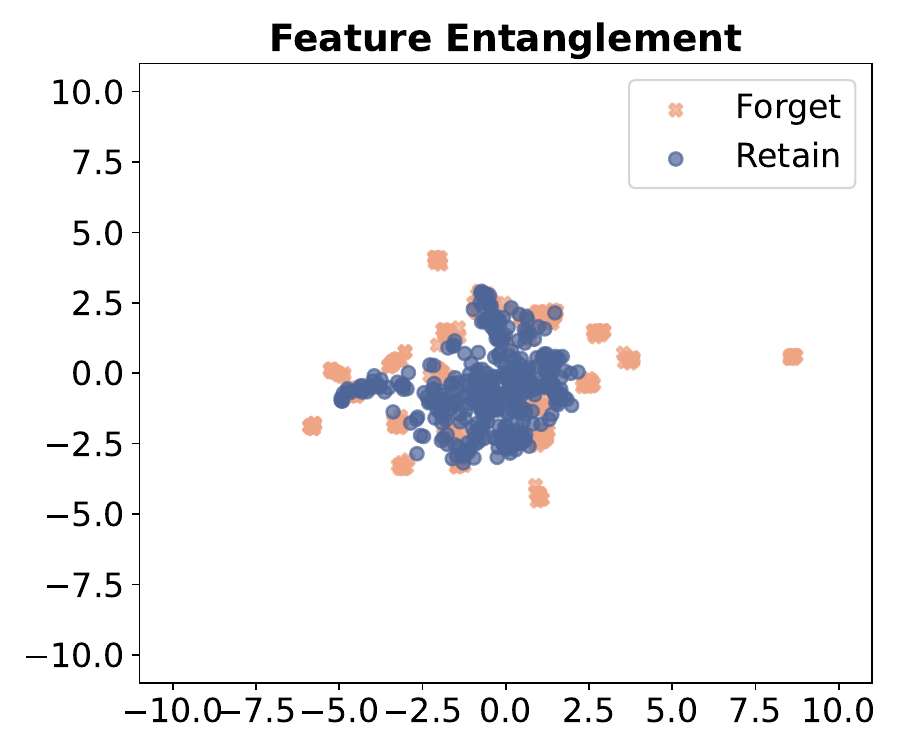}
\subcaption*{NPO, Llama-3-8B}
\end{subfigure}\hfill
\begin{subfigure}[tb]{0.25\linewidth}
\includegraphics[width=\linewidth]{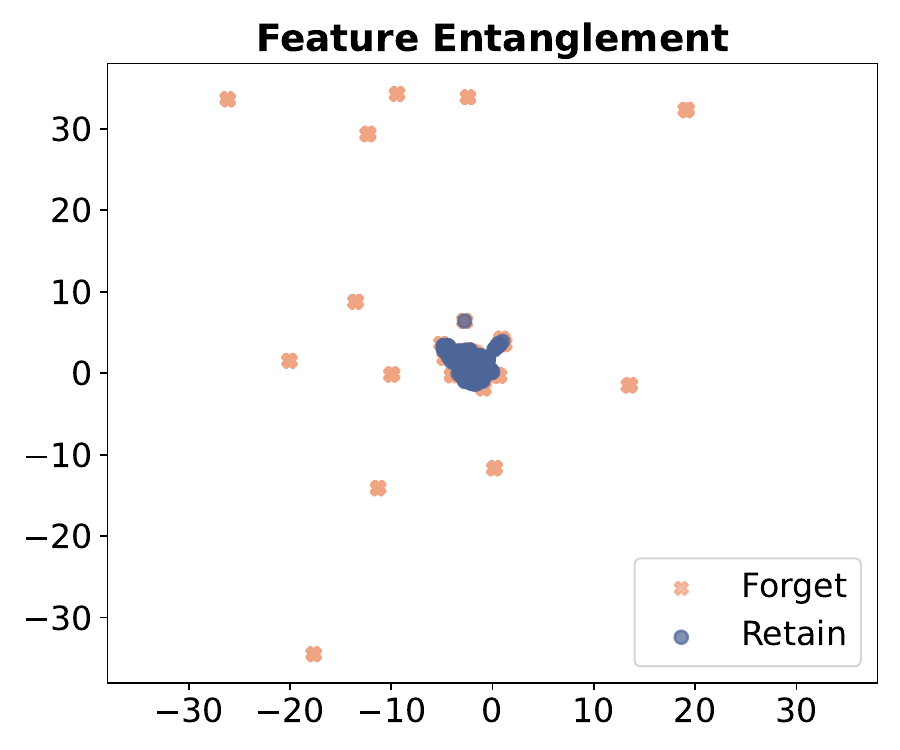}
\subcaption*{NPO+CL, Llama-3-8B}
\end{subfigure}\hfill
\begin{subfigure}[tb]{0.25\linewidth}
\includegraphics[width=\linewidth]{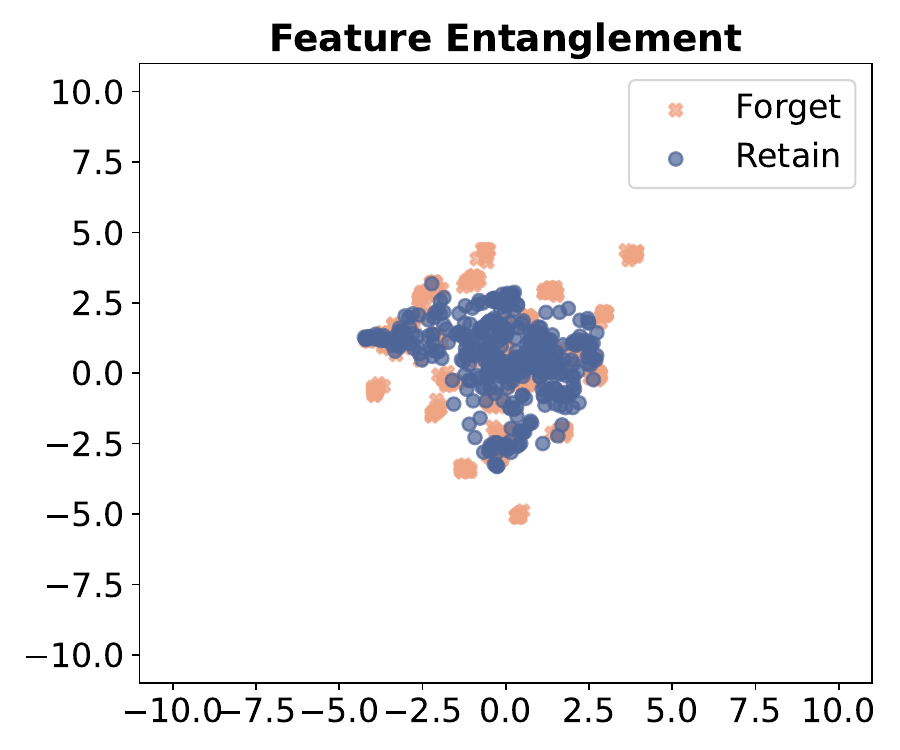}
\subcaption*{SimNPO, Llama-3-8B}
\end{subfigure}\hfill
\begin{subfigure}[tb]{0.25\linewidth}
\includegraphics[width=\linewidth]{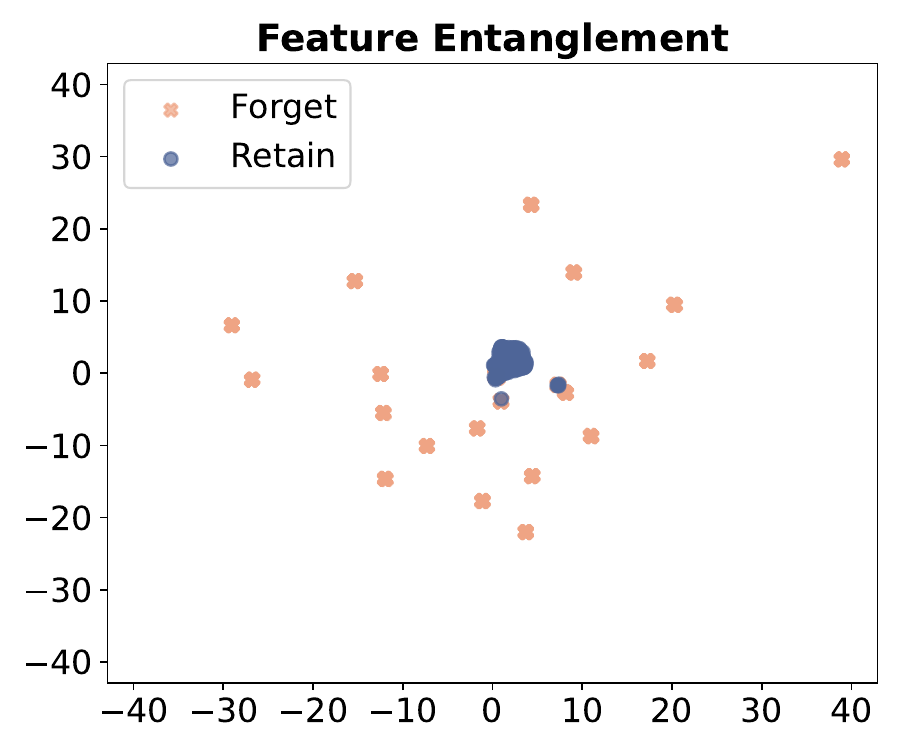}
\subcaption*{SimNPO+CL, Llama-3-8B}
\end{subfigure}
%─── Group 4 (plots B1–B4) ───%
\begin{subfigure}[tb]{0.25\linewidth}
\includegraphics[width=\linewidth]{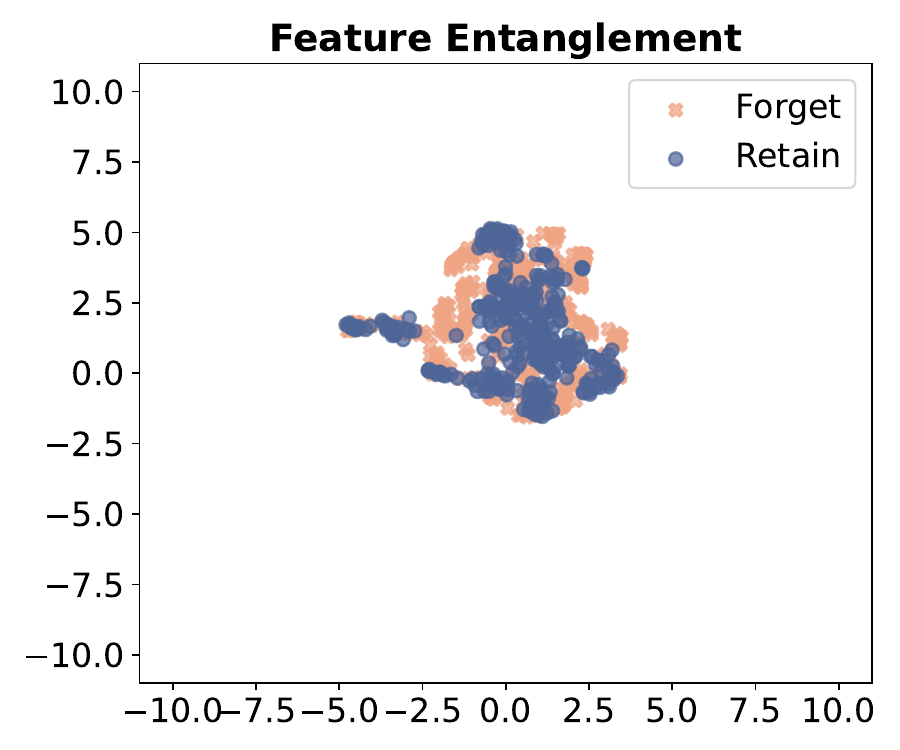}
\subcaption*{NPO, Llama-3-3B}
\end{subfigure}\hfill
\begin{subfigure}[tb]{0.25\linewidth}
\includegraphics[width=\linewidth]{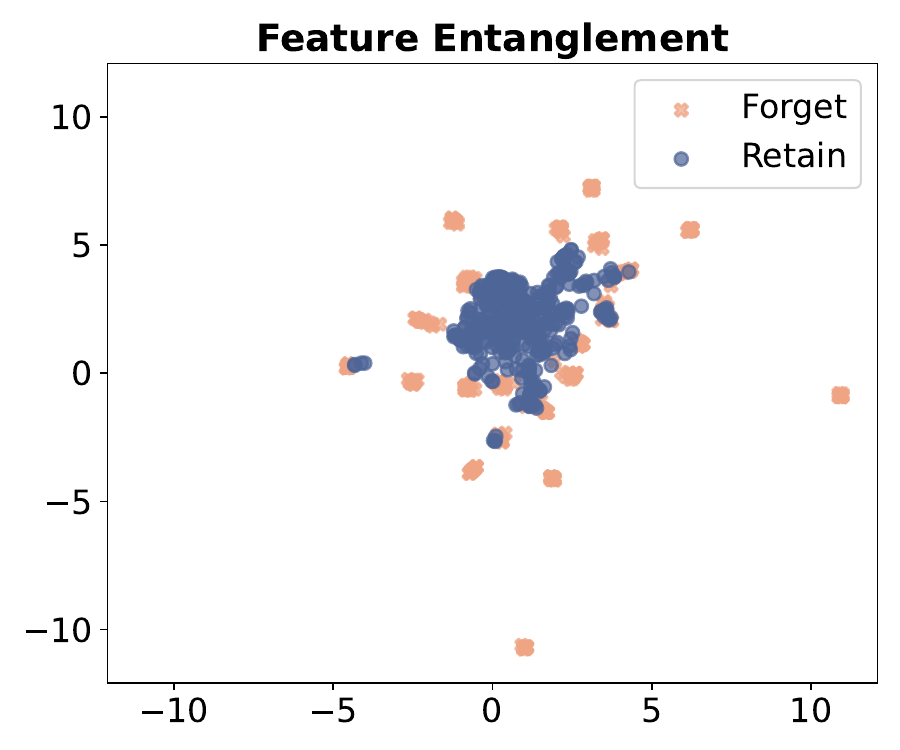}
\subcaption*{NPO+CL, Llama-3-3B}
\end{subfigure}\hfill
\begin{subfigure}[tb]{0.25\linewidth}
\includegraphics[width=\linewidth]{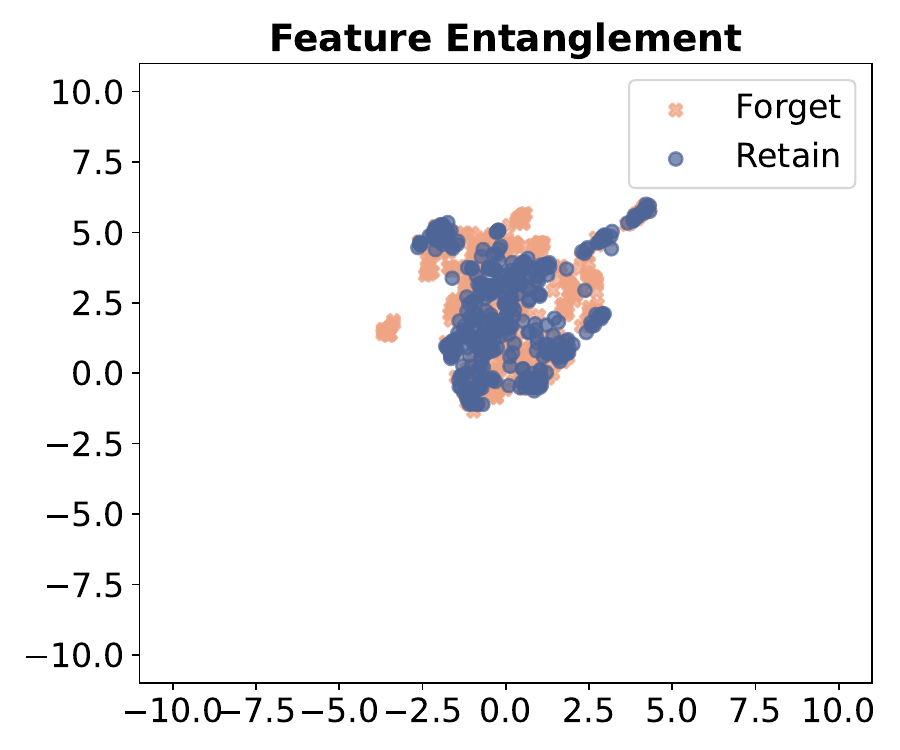}
\subcaption*{SimNPO, Llama-3-3B}
\end{subfigure}\hfill
\begin{subfigure}[tb]{0.25\linewidth}
\includegraphics[width=\linewidth]{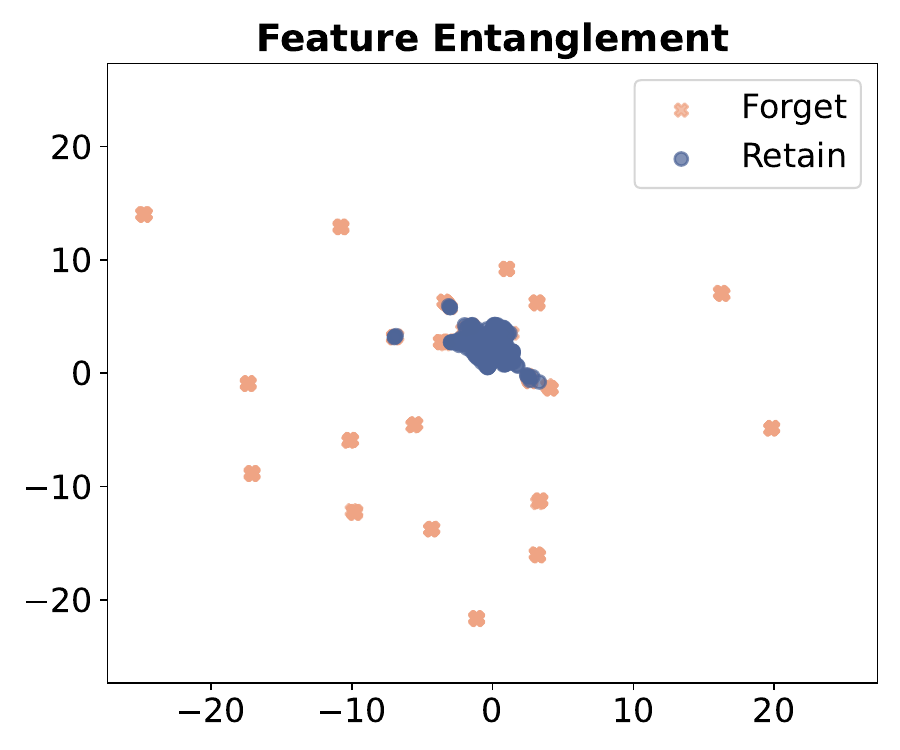}
\subcaption*{SimNPO+CL, Llama-3-3B}
\end{subfigure}
\caption{UMAP visualizations of NPO and SimNPO unlearning on TOFU benchmark, compared with CLReg variants. We observe that CLReg can effectively identify and separate forget features by pushing them away, while still maintaining the original scale and distributions of the retain features. Please refer to the axis scales.}
\vspace{-10pt}
\label{fig:umap}
\end{figure*}
\begin{table}[htb!]
\centering
\small
\begin{subtable}{\linewidth}
\centering
\resizebox{\linewidth}{!}{%
\begin{tabular}{l|cc|cc}
\toprule
\textbf{Llama-3-8B} & NPO & +CL & SimNPO & +CL \\
\midrule
Entanglement$\downarrow$    & 22.30576 &\cellcolor{green!15}17.27656 & 20.24046 & \cellcolor{green!15}5.90206 \\
MK-MMD$\uparrow$          & 0.01424 &\cellcolor{green!15}0.02138 & 0.01537 & \cellcolor{green!15}0.07327 \\
2-Wasserstein$\uparrow$   & 0.10355 &\cellcolor{green!15}0.15881 & 0.10001 & \cellcolor{green!15}0.31250 \\
\bottomrule
\end{tabular}}
\label{tab:tofu-feat-books}
\end{subtable}
% \vspace{-5pt}
\begin{subtable}{\linewidth}
\centering
\resizebox{\linewidth}{!}{%
\begin{tabular}{l|cc|cc}
\toprule
\textbf{Llama-3-3B} & NPO & +CL & SimNPO & +CL \\
\midrule
Entanglement$\downarrow$    & 25.07525 &\cellcolor{green!15}18.02604 &24.32758 &\cellcolor{green!15}18.02604 \\
MK-MMD$\uparrow$          & 0.01213 &\cellcolor{green!15}0.01820 &0.01225 &\cellcolor{green!15}0.01820\\
2-Wasserstein$\uparrow$   & 0.06348 &\cellcolor{green!15}0.10085 &0.05552 &\cellcolor{green!15}0.10085\\
\bottomrule
\end{tabular}}
\label{tab:tofu-feat-news}
\end{subtable}
\caption{TOFU entanglement evaluation results (8B and 3B Llama3 models). We observe that CLReg consistently reduces feature entanglement across all three metrics on NPO and SimNPO. The largest imporvement comes from SimNPO+CL on Llama-3-8B where it reduces entanglement from $20.24$ to $5.9$.}
\vspace{-10pt}
\label{tab:tofu-feat}
\end{table}
\begin{table}[htb!]
\centering
\small
\begin{subtable}{\linewidth}
\centering
\resizebox{\linewidth}{!}{%
\begin{tabular}{l|cc|cc}
\toprule
\textbf{MUSE-Books} & NPO & +CL & SimNPO & +CL \\
\midrule
Entanglement$\downarrow$    &0.36104 &\cellcolor{green!15}0.28346 &0.03898 &\cellcolor{green!15}0.02058 \\
MK-MMD$\uparrow$          &0.47111 &\cellcolor{green!15}0.57001 &1.21513 &\cellcolor{green!15}1.27422 \\
2-Wasserstein$\uparrow$   &0.17936 &\cellcolor{green!15}0.23721 &0.62096 &\cellcolor{green!15}0.90730 \\
\bottomrule
\end{tabular}}
\label{tab:muse-feat-books}
\end{subtable}
% \vspace{-5pt}
\begin{subtable}{\linewidth}
\centering
\resizebox{\linewidth}{!}{%
\begin{tabular}{l|cc|cc}
\toprule
\textbf{MUSE-News} & NPO & +CL & SimNPO & +CL \\
\midrule
Entanglement$\downarrow$    &59.07131 &\cellcolor{green!15}58.66790 &92.17580 &\cellcolor{green!15}11.08532 \\
MK-MMD$\uparrow$          &0.00515 &\cellcolor{green!15}0.00520 &0.00392 &\cellcolor{green!15}0.01775 \\
2-Wasserstein$\uparrow$   &0.05856 &\cellcolor{green!15}0.05899 &0.04923 &\cellcolor{green!15}0.11468 \\
\bottomrule
\end{tabular}}
\label{tab:muse-feat-news}
\end{subtable}
\caption{MUSE entanglement evaluation results (Books and News). We observe that CLReg consistently reduces feature entanglement across all three metrics on NPO and SimNPO. The largest imporvement comes from SimNPO+CL on MUSE-News where it reduces entanglement from $92.18$ to $11.09$.}
\vspace{-10pt}
\label{tab:muse-feat}
\end{table}
We also dive into the feature layer where CLReg is applied. As we propose first theoretical insights that relate representation shaping with reducing forget-retain feature entanglement, and as previous work suggest the inverse relationship of entanglement and unlearning difficulty~\cite{zhao2024makes,tang2025sharpness}, we provide quantitative and qualitative analysis of the feature space, comparing NPO, SimNPO with NPO+CL and SimNPO+CL to verify our claims. We implement variance-based entanglement from \citet{goldblum2020unraveling,zhao2024makes}:
\[
\mathrm{E}=\frac{\frac{1}{|\mathcal{R}|} \sum_{i \in \mathcal{R}}(\boldsymbol{\phi}_i-\boldsymbol{\mu}_{\mathcal{R}})^2+\frac{1}{|\mathcal{F}|} \sum_{j \in \mathcal{F}}(\boldsymbol{\phi}_j-\boldsymbol{\mu}_{\mathcal{F}})^2}{(\boldsymbol{\mu}_{\mathcal{R}}-\boldsymbol{\mu})^2+(\boldsymbol{\mu}_{\mathcal{F}}-\boldsymbol{\mu})^2},
\]
where $\boldsymbol{\phi}_i,\boldsymbol{\phi}_j$ denote sample embedding, $\boldsymbol{\mu}_{\mathcal{R}},\boldsymbol{\mu}_{\mathcal{F}}$ denote mean embedding of $\mathcal{R},\mathcal{F}$, and $\boldsymbol{\mu}$ denotes mean embedding over $\mathcal{R}\cup\mathcal{F}$. We also implement multi-kernel Maximum Mean Discrepancy (MMD) and 2-Wasserstein distance $W_2$ to comprehensively evaluate the feature separation after unlearning~\cite{JMLR:v13:gretton12a,tang2025sharpness}.

As expected, CLReg explicitly identifies and pushes away forget features, resulting in reduced entanglement. In Table~\ref{tab:tofu-feat} and Table~\ref{tab:muse-feat}, we observe that the entanglement between retain and forget features is consistently lowered, with more noticeable changes on TOFU. But does the shifted representation space alter the distributions of retained knowledge? We further visualize the feature space using U-MAP~\cite{mcinnes2018umap} in Figure~\ref{fig:umap}. Comparing to forget features which are pushed away, CLReg does not move retain features much: while NPO and SimNPO keep both features inside around $[-5,5]$ scale, CLReg is able to maintain the distributions of retain features in the original scale, but pushes forget features far away to span a roughly $[-30,30]$ scale. Even for the most challenging case (NPO, Llama-3-3B) where features are more entangled than others after NPO unlearning, CLReg is still able to separate forget features to span a larger $[-10,10]$ scale. The visualizations effectively demonstrate how CLReg can identify and push away forget features while keeping retained knowledge intact. The clear separation also inspires future work to clip the ``outlier'' forget features for a faithful, complete unlearning.

\subsection{Which Layer to Regularize?}
\begin{table}[htb!]
\centering
\small
\begin{subtable}{\linewidth}
\centering
\resizebox{\linewidth}{!}{%
\begin{tabular}{l|cccc}
\toprule
\shortstack[c]{\textbf{SimNPO+CL} \\\textbf{Llama-3-8B}} & Forget Score$\uparrow$ & Model Utility$\uparrow$ & \shortstack[c]{\textbf{Unlearning} \\\textbf{Score}$\uparrow$} & \shortstack[c]{Privacy\\Leak$\rightarrow$0} \\
\midrule
Last 1          &0.97182	&0.69815	&\textbf{0.81256}	&55.03337 \\
Last 4          &0.86847    &0.67944    &0.76241    &62.32824  \\
Last 7          &0.95680    &0.67037    &0.78837    &62.05757 \\
Last 10         &0.99620    &0.67295    &0.80328    &51.64990 \\
Last 13         &0.99881    &0.66042    &0.79511    &48.281912 \\
\bottomrule
\end{tabular}}
\label{tab:tofu-simnpo-layer-8b}
\end{subtable}
% \vspace{-5pt}
\begin{subtable}{\linewidth}
\centering
\resizebox{\linewidth}{!}{%
\begin{tabular}{l|cccc}
\toprule
\shortstack[c]{\textbf{SimNPO+CL} \\\textbf{Llama-3-3B}} & Forget Score$\uparrow$ & Model Utility$\uparrow$ & \shortstack[c]{\textbf{Unlearning} \\\textbf{Score}$\uparrow$} & \shortstack[c]{Privacy\\Leak$\rightarrow$0} \\
\midrule
Last 1          &0.96209	&0.66531	&\textbf{0.78664}	&-3.72569 \\
Last 4          &0.94276    &0.62183    &0.74938    &-11.10044 \\
Last 7          &0.91280    &0.59783    &0.72248    &-27.33673 \\
Last 10         &0.85324    &0.62374    &0.72066    &-54.43742 \\
Last 13         &0.85609    &0.63859    &0.73151    &-57.77889 \\
\bottomrule
\end{tabular}}
\label{tab:tofu-simnpo-layer-3b}
\end{subtable}
\caption{TOFU SimNPO+CL layer selection ablation study. As we choose from late layers to earlier layers, the performance will be negatively impacted. Unlearning in earlier layers might harm fundamental knowledge.}
\vspace{-10pt}
\label{tab:tofu-simnpo-layer}
\end{table}
\begin{table}[htb!]
\centering
\small
\begin{subtable}{\linewidth}
\centering
\resizebox{\linewidth}{!}{%
\begin{tabular}{l|cccc}
\toprule
\shortstack[c]{\textbf{NPO+CL} \\\textbf{Llama-3-8B}} & Forget Score$\uparrow$ & Model Utility$\uparrow$ & \shortstack[c]{\textbf{Unlearning} \\\textbf{Score}$\uparrow$} & \shortstack[c]{Privacy\\Leak$\rightarrow$0} \\
\midrule
Last 1          &0.87003	&0.68487	&\textbf{0.76643}	&-56.01587 \\
Last 4          &0.81545    &0.66091    &0.73009    &-67.07854 \\
Last 7          &0.84904    &0.64266    &0.73157    &-56.63058 \\
Last 10         &0.86303    &0.65279    &0.74333    &-52.64892 \\
Last 13        &0.85474    &0.66738    &0.74953    &-51.92987 \\
\bottomrule
\end{tabular}}
\label{tab:tofu-npo-layer-8b}
\end{subtable}
% \vspace{-5pt}
\begin{subtable}{\linewidth}
\centering
\resizebox{\linewidth}{!}{%
\begin{tabular}{l|cccc}
\toprule
\shortstack[c]{\textbf{NPO+CL} \\\textbf{Llama-3-3B}} & Forget Score$\uparrow$ & Model Utility$\uparrow$ & \shortstack[c]{\textbf{Unlearning} \\\textbf{Score}$\uparrow$} & \shortstack[c]{Privacy\\Leak$\rightarrow$0} \\
\midrule
Last 1          &0.84701	&0.63707	&\textbf{0.72719}	&-64.04083 \\
Last 4          &0.80283    &0.62728    &0.70428    &-72.16869 \\
Last 7          &0.81462    &0.62554    &0.70767    &-71.69250 \\
Last 10         &0.80590    &0.62932    &0.70675    &-74.76727 \\
Last 13         &0.80150    &0.62739    &0.70384    &-75.01047 \\
\bottomrule
\end{tabular}}
\label{tab:tofu-npo-layer-3b}
\end{subtable}
\caption{TOFU NPO+CL layer selection ablation study. Similar to Table~\ref{tab:tofu-simnpo-layer}, as we choose from late layers to earlier layers, the performance will be negatively impacted.}
\vspace{-10pt}
\label{tab:tofu-npo-layer}
\end{table}
Intuitively, $\mathcal{F}$-specific concepts are higher-level features residing in later layers, while earlier layers learn fundamental knowledge and common concepts shared among $\mathcal{R}$ and $\mathcal{F}$. We verify this intuition empirically by performing an ablation on which feature layer to perform CLReg on. We select last $[1,4,7,10,13]$ feature layer and conduct NPO+CL and SimNPO+CL experiments on TOFU, and report evaluation results in Table~\ref{tab:tofu-simnpo-layer} and Table~\ref{tab:tofu-npo-layer}. As we apply CLReg to earlier layers, the performance will degrade, resulting in consistently reduced \texttt{ModelUtility} and \texttt{UnlearningScore}. This meets our expectation, and also aligns with similar observations in previous work~\cite{hong-etal-2024-dissecting}. We hypothesize that unlearning happens most effectively in later layers.
\section{Conclusion}
In this work, we argue that explicit representation shaping will not undermine the goal of unlearning to match the retrained model's behaviors. Instead, it provides a way to separate forget concepts from retain concepts in the feature space for easier unlearning, leading to possible complete removal of forget concepts. We provide theoretical insights on how the entanglement between forget and retain features can be reduced by our proposed CLReg, and conduct extensive empirical studies to demonstrate its effectiveness and desired properties. We hope our study can inspire future unlearning work to focus on representation shaping and derive surgical approaches to remove forget concepts.

\clearpage
\section{Impact Statement}
This paper presents work whose goal is to advance the field of machine learning. There are many potential societal consequences of our work, none of which we feel must be specifically highlighted here.
% In the unusual situation where you want a paper to appear in the
% references without citing it in the main text, use \nocite
\nocite{*}

\bibliography{example_paper}
\bibliographystyle{icml2026}

%%%%%%%%%%%%%%%%%%%%%%%%%%%%%%%%%%%%%%%%%%%%%%%%%%%%%%%%%%%%%%%%%%%%%%%%%%%%%%%
%%%%%%%%%%%%%%%%%%%%%%%%%%%%%%%%%%%%%%%%%%%%%%%%%%%%%%%%%%%%%%%%%%%%%%%%%%%%%%%
% APPENDIX
%%%%%%%%%%%%%%%%%%%%%%%%%%%%%%%%%%%%%%%%%%%%%%%%%%%%%%%%%%%%%%%%%%%%%%%%%%%%%%%
%%%%%%%%%%%%%%%%%%%%%%%%%%%%%%%%%%%%%%%%%%%%%%%%%%%%%%%%%%%%%%%%%%%%%%%%%%%%%%%
\newpage
\appendix
\onecolumn
\section{Appendix}

\subsection{Overview of TOFU and MUSE}
\label{supp:data}
We provide an overview of benchmarks used in our work with concrete examples to better illustrate the differences and difficulties of each dataset:

\paragraph{The TOFU benchmark~\cite{maini2024tofu}} consists of question-answer pairs with short length, based on autobiographies of 200 different authors that are fictitiously generated by GPT-4. The unlearning task objective is to forget the fictitiously answers in the forget set $\mathcal{F}$. An data example is as follows:
\begin{itemize}
    \item \textit{\textbf{question:} Can you share the title of one of Hsiao Yun-Hwa's most popular books?}
    \item \textit{\textbf{answer:} One of Hsiao Yun-Hwa's most popular books in the leadership genre is "Artistic Authority: Leading with Creativity".}
\end{itemize}
It provides multiple data splits and include paraphrased and perturbed data versions for extended use.

\paragraph{The MUSE benchmark~\cite{shi2024muse}} consists of two distinct datasets Books and News. The Books dataset comprises Harry Potter book series by J. K. Rowling. The train set $\mathcal{S}$ consists of long, main story chapters where the forget subset $\mathcal{F}$ contains false information. Multiple evaluation sets are designed to be question-answer or prompt-response pairs with long text lengths. Due to length, here we only show an example of question-answer pair for \texttt{KnowMem} evaluation set on forget concepts (they are short):
\begin{itemize}
    \item \textit{\textbf{question:} What were the two new books mentioned in Harry's letter that he needed for the coming year?}
    \item \textit{\textbf{answer:} The Standard Book of Spells, Grade 5, by Miranda Goshawk, and Defensive Magical Theory, by Wilbert Slinkhard.}
\end{itemize}
The News dataset consists of BBC News where each example in train set is shorter than that in Books. The examples in the forget set are fake news. Multiple evaluation sets are designed to be question-answer or prompt-response pairs with long text lengths. Here we only show an example of question-answer pair for \texttt{KnowMem} evaluation set on forget concepts (they are short):
\begin{itemize}
    \item \textit{\textbf{question:} Which three nuclear power plants were taken offline in Germany by midnight on Saturday?}
    \item \textit{\textbf{answer:} Isar 2, Emsland and Neckarwestheim 2}
\end{itemize}
Both Books and News have evaluation sets with long-length examples dedicated to privacy metric \texttt{PrivLeak}.

\subsection{Detailed Experiment Settings}
\label{supp:exp-settings}
\subsubsection{Unlearning Hyper-parameters}
We unlearn with GradDiff, NPO, SimNPO, UnDIAL, PDU~\cite{zhang2024negative,fan2024simplicity,dong2024undial,entesari2025constrained}. We provide detailed hyper-parameter settings for each unlearning method:

\textbf{GradDiff}: GradDiff can be unstable due to aggressive ascent and requires careful tuning. We choose $\gamma=0.01$ on MUSE-Books and $\gamma=1$. Performance becomes more stable on TOFU and we pick $\gamma=0.4$ for both Llama-3-8B and 3B.

\textbf{NPO}: We fix $\gamma=\alpha=1$ and search for $\beta$ values. We pick $\beta=0.05$ on both MUSE-Books and News, and $\beta=0.4$ on TOFU for both Llama-3-8B and 3B.

\textbf{SimNPO}: We fix $\gamma=\alpha=1$ and search for $\beta$ values. We pick $\beta=1$ on both MUSE-Books and $\beta=0.05$ for News, and $\beta=1.5$ on TOFU for both Llama-3-8B and 3B. Note that as $\beta$ becomes smaller for NPO and SimNPO, they behave more similar to GradDiff~\cite{meng2024simpo, fan2024simplicity}.

\textbf{UnDIAL}: We observe that UnDIAL barely needs retaining and we set $\alpha=0, \gamma=1$. UnDIAL has another $\beta$ term as the strength of penalty for memorized tokens. We pick $\beta=10$ on both MUSE-Books and News, and $\beta=15$ on TOFU for both Llama-3-8B and 3B.

\textbf{PDU}: PDU performs steadily across settings. We adopt step size $1$ and $\gamma=\alpha=1$. We adopt $1$ warmup epoch for MUSE-Books, News, and TOFU Llama-3-3B. We adopt $2$ warmup epochs for TOFU Llama-3-8B.

After obtaining optimal baseline results, we then add CLReg with $\lambda=1$ and tune CLReg-specific parameters: $\tau\times[\text{symmetric, non-symmetric}]\times[\mathcal{L}_{\text{CL}}^{\text{dpo}},\mathcal{L}_{\text{CL}}^{\text{info}}]$. We fix $\alpha,\lambda=1$. Specifically, we sweep $\tau$ in $[0.1,0.3,0.5,0.7,0.9]$, and obtain optimal settings for each unlearning method + CLReg:

\textbf{GradDiff}: We pick $[0.1,\text{non-symmetric},\mathcal{L}_{\text{CL}}^{\text{dpo}}]$ for MUSE-Books, $[0.3,\text{non-symmetric},\mathcal{L}_{\text{CL}}^{\text{dpo}}]$ for MUSE-News, $[0.1,\text{symmetric},\mathcal{L}_{\text{CL}}^{\text{info}}]$ for TOFU Llama-3 3B, and $[0.3,\text{symmetric},\mathcal{L}_{\text{CL}}^{\text{dpo}}]$ for TOFU Llama-3 8B. 

\textbf{NPO}: We pick $[0.3,\text{non-symmetric},\mathcal{L}_{\text{CL}}^{\text{info}}]$ for MUSE-Books, $[0.9,\text{symmetric},\mathcal{L}_{\text{CL}}^{\text{info}}]$ for MUSE-News, $[0.5,\text{non-symmetric},\mathcal{L}_{\text{CL}}^{\text{info}}]$ for TOFU Llama-3 3B, and $[0.7,\text{non-symmetric},\mathcal{L}_{\text{CL}}^{\text{dpo}}]$ for TOFU Llama-3 8B.

\textbf{SimNPO}: We pick $[0.3,\text{symmetric},\mathcal{L}_{\text{CL}}^{\text{dpo}}]$ for MUSE-Books, $[0.3,\text{non-symmetric},\mathcal{L}_{\text{CL}}^{\text{dpo}}]$ for MUSE-News, $[0.5,\text{symmetric},\mathcal{L}_{\text{CL}}^{\text{info}}]$ for TOFU Llama-3 3B, and $[0.9,\text{symmetric},\mathcal{L}_{\text{CL}}^{\text{dpo}}]$ for TOFU Llama-3 8B.

\textbf{UnDIAL}: We pick $[0.3,\text{non-symmetric},\mathcal{L}_{\text{CL}}^{\text{info}}]$ for MUSE-Books, $[0.5,\text{non-symmetric},\mathcal{L}_{\text{CL}}^{\text{dpo}}]$ for MUSE-News, $[0.1,\text{symmetric},\mathcal{L}_{\text{CL}}^{\text{dpo}}]$ for TOFU Llama-3 3B, and $[0.5,\text{symmetric},\mathcal{L}_{\text{CL}}^{\text{dpo}}]$ for TOFU Llama-3 8B. Additionally, we slightly increase $\alpha$ to $0.1$ to balance the addition of CLReg.

\textbf{PDU}: We pick $[0.9,\text{non-symmetric},\mathcal{L}_{\text{CL}}^{\text{info}}]$ for MUSE-Books, $[0.3,\text{non-symmetric},\mathcal{L}_{\text{CL}}^{\text{info}}]$ for MUSE-News, $[0.3,\text{non-symmetric},\mathcal{L}_{\text{CL}}^{\text{info}}]$ for TOFU Llama-3 3B, and $[0.5,\text{non-symmetric},\mathcal{L}_{\text{CL}}^{\text{dpo}}]$ for TOFU Llama-3 8B.

\subsubsection{Experiment Environment}
We adapt existing, open-source code base and datasets for conducting experiments and developing new algorithms~\footnote{\url{https://github.com/locuslab/open-unlearning}, \url{https://huggingface.co/datasets/locuslab/TOFU}, \url{https://huggingface.co/datasets/muse-bench/MUSE-Books}, \url{https://huggingface.co/datasets/muse-bench/MUSE-News}}~\cite{maini2024tofu,shi2024muse,dorna2025openunlearning}. All experiments are conducted on NVIDIA H100 GPUs.

\subsection{Overview of Evaluation Metrics}
\label{supp:metrics-overview}
We provide an overview of evaluation metrics adopted in our work. Many of the metrics can be applied to both $\mathcal{R}$ and $\mathcal{F}$ while expecting inverse behaviors.

\textbf{Memorization metrics}, which quantifies how much information the data sample has been memorized:
\begin{itemize}
    \item \textbf{Probability}: Quantifies the model's confidence in its output: $\mathrm{Prob}=p(\mathbf{W}(y\;|\;x))$.
    \item \textbf{ROGUE}: Quantifies the degree of overlap between model output and the ground truth.
    \item \textbf{Truth Ratio}: Measures the model's preference for the correct answer over its incorrect variants. A higher value indicates stronger confidence in the correct response, making it privacy-aware.
    \item \textbf{Exact Memorization (EM)}: Similar to \texttt{ROGUE}, \texttt{EM} quantifies memorization by calculating proportion of matched tokens in the model output with ground truth.
    \item \textbf{Extraction Strength (ES)}: Quantifies memorization by determining the minimal prefix length required to reconstruct the suffix.
\end{itemize}

\textbf{Privacy metrics}, which evaluates whether sensitive information from the forget set can still be inferred or extracted:
\begin{itemize}
    \item \textbf{PrivLeak}~\cite{shi2024muse}: Calibrated Membership Inference Attack (MIA) with AUC scores from the retain model:
    \[
    \mathrm{PrivLeak} = \frac{\mathrm{AUC}(\mathbf{W}_{\text{UL}},\mathcal{F}) - \mathrm{AUC}(\mathbf{W}_{\text{RT}},\mathcal{F})}{\mathrm{AUC}(\mathbf{W}_{\text{RT}},\mathcal{F})}.
    \]
    \item \textbf{Forget Quality}: Performs KS statistical test on \texttt{TruthRatio} distributions of $\mathbf{W}_{\text{UL}}$ and $\mathbf{W}_{\text{RT}}$, yielding $p$ values which are high when the two distributions are close.
\end{itemize}

\textbf{Utility metrics}, which can reuse memorization metrics to ensure that retain knowledge is well maintained:
\begin{itemize}
    \item \textbf{Model Utility}: Harmonic mean of \texttt{Prob, ROGUE, TruthRatio} on the retain set $\mathcal{R}$. 
    \item \textbf{KnowMem ROGUE}: Measures \texttt{ROGUE} on knowledge-based questions regarding $\mathcal{R}$.
\end{itemize}

See detailed explanation and discussion in TOFU and MUSE~\cite{maini2024tofu,shi2024muse}.

\subsection{Limitations and Future Work}
\label{supp:limitation}
The success of contrastive objectives hinges on design decisions such as data augmentation, number of negative examples and batch size, yet how these choices interact is not well understood. Consequently, CLReg may require careful tuning and may be sensitive to dataset properties. Second, CLReg acts as a regularizer layered on top of an existing unlearning loss, so its efficacy depends on the base unlearning algorithm and how constructive the interactions between multiple objectives are. Finally, our theoretical analysis makes simplifying assumptions; broader evaluation is needed to understand scalability and robustness.

Future work could address these limitations in several ways. One promising direction is to develop techniques for surgical removal or clipping of the disentangled forget subspace after CLReg training, effectively removing forget features while preserving retain features to achieve more faithful unlearning. Another is to explore richer augmentation strategies and negative-sampling schemes to reduce reliance on hand‑tuned dropout and paraphrases. Finally, it would be valuable to derive formal privacy and fairness guarantees for representation‑shaped unlearning, and to study how CLReg performs under repeated or incremental unlearning requests and continued learning on new data.
% \section{You \emph{can} have an appendix here.}

% You can have as much text here as you want. The main body must be at most $8$
% pages long. For the final version, one more page can be added. If you want, you
% can use an appendix like this one.

% The $\mathtt{\backslash onecolumn}$ command above can be kept in place if you
% prefer a one-column appendix, or can be removed if you prefer a two-column
% appendix.  Apart from this possible change, the style (font size, spacing,
% margins, page numbering, etc.) should be kept the same as the main body.
%%%%%%%%%%%%%%%%%%%%%%%%%%%%%%%%%%%%%%%%%%%%%%%%%%%%%%%%%%%%%%%%%%%%%%%%%%%%%%%
%%%%%%%%%%%%%%%%%%%%%%%%%%%%%%%%%%%%%%%%%%%%%%%%%%%%%%%%%%%%%%%%%%%%%%%%%%%%%%%

\end{document}